\theoremstyle{plain}
\newtheorem{theorem}{Theorem}[section]
\newtheorem{proposition}[theorem]{Proposition}
\newtheorem{lemma}[theorem]{Lemma}
\theoremstyle{definition}
\newtheorem{definition}[theorem]{Definition}
\theoremstyle{remark}
\newtheorem{remark}[theorem]{Remark}
\theoremstyle{plain}
\def\eqref#1{equation~\ref{#1}}
\def\1{\bm{1}}
\DeclareMathAlphabet{\mathsfit}{\encodingdefault}{\sfdefault}{m}{sl}
\SetMathAlphabet{\mathsfit}{bold}{\encodingdefault}{\sfdefault}{bx}{n}
\newcommand{\pdata}{p_{\rm{data}}}
\newcommand{\ptrain}{\hat{p}_{\rm{data}}}
\newcommand{\E}{\mathbb{E}}
\DeclareMathOperator{\sign}{sign}
\definecolor{orange}{RGB}{222, 102, 13}
\theoremstyle{plain}
\newtheorem{claim}{Claim}
\newcommand{\nvoc}{N_{\rm voc}}
\icmltitlerunning{Towards Understanding Inductive Bias in Transformers: A View From Infinity}
\begin{document}

\twocolumn[
\icmltitle{Towards Understanding Inductive Bias in Transformers: A View From Infinity}




\begin{icmlauthorlist}
\icmlauthor{Itay Lavie}{huji}
\icmlauthor{Guy Gur-Ari}{guy}
\icmlauthor{Zohar Ringel}{huji}
\end{icmlauthorlist}

\icmlaffiliation{huji}{Racah Institute of Physics, Hebrew University of Jerusalem, Jerusalem 91904, Israel}
\icmlaffiliation{guy}{Augment Computing}

\icmlcorrespondingauthor{Itay Lavie}{itay.lavie@mail.huji.ac.il}

\icmlkeywords{Gaussian process, NNGP, NTK, Transformer, Symmetry, Represenation theory, Deep learning theory, infinite width}

\vskip 0.3in
]



\printAffiliationsAndNotice{}  

\begin{abstract}
We study inductive bias in Transformers in the infinitely over-parameterized Gaussian process limit and argue transformers tend to be biased towards more permutation symmetric functions in sequence space. We show that the representation theory of the symmetric group can be used to give quantitative analytical predictions when the dataset is symmetric to permutations between tokens.
We present a simplified transformer block and solve the model at the limit, including accurate predictions for the learning curves and network outputs. We show that in common setups, one can derive tight bounds in the form of a scaling law for the learnability as a function of the context length. Finally, we argue WikiText dataset, does indeed possess a degree of permutation symmetry.
\end{abstract}

\section{Introduction}
Transformers show state-of-the-art performance on a wide variety of tasks~\cite{wolf_huggingfaces_2020,dosovitskiy_image_2021,chen_generative_2020,min_transformer_2022,brown_language_2020} with seemingly ever-improving performance~\cite{kaplan_scaling_2020,henighan_scaling_2020}. The past year has brought forth larger and more capable models than ever before~\cite{jiang_mixtral_2024,openai_gpt-4_2023,geminiteam_gemini_2023}, yet our understanding of them falls behind~\cite{goyal_inductive_2022,wen_transformers_2023}.

Recent works have advanced us in understanding specific aspects and behaviors like grokking~\cite{nanda_progress_2023,rubin_droplets_2023,liu_omnigrok_2022,liu_towards_2022}, in-context learning~\cite{von_oswald_transformers_2023,olsson_-context_2022}, and out-of-distribution (OOD) generalization~\cite{nam_achieving_2022,canatar_out--distribution_2021}. However, a unified view of the inductive bias of transformers is still lacking. It has been claimed that understanding and designing networks with better inductive bias is a necessary step toward AI~\cite{goyal_inductive_2022}; this can also make them safer and more suitable for deployment in high-risk situations (see for example \citet{Bommasani2021FoundationModels} and European Union AI act~\cite{europeancommission_proposal_2021}).

We approach the challenge from the infinitely over-parameterized Gaussian process (GP) limit, where the neural network (NN) becomes more analytically tractable but still shares many qualitative and quantitative similarities with finite NNs used in real life~\cite{lee_finite_2020,shankar_neural_2020,novak_bayesian_2018}. An established correspondence between infinitely wide NNs and Bayesian inference with a GP~\cite{neal_priors_1996,lee_finite_2020,welling_bayesian_2011,naveh_predicting_2021} allows us to identify the inductive bias of the NN in the GP limit with its Bayesian prior, thereby yielding a demystified concrete expression for the inductive bias. This approach has proven itself in the study of deep fully connected networks and convolutional neural networks (CNNs) by enabling accurate prediction of learning curves and explaining phenomena such as reduction of task complexity due to weight sharing~\cite{novak_bayesian_2018,naveh_predicting_2021} and CNN's utilization of hierarchical structure in the data~\cite{cagnetta_what_2023}.

In this work, we characterize the inductive bias by specifying how many samples will be required to learn a target function. We show that when the dataset possesses a permutation symmetry, learnability is closely tied to the irreducible representations (irreps) of the symmetric group. Namely, the more symmetric the function to permutations, as quantified below, the more learnable it is. 
We present an instructive example, work through it, and show we can accurately predict its outputs at the limit including performance under distributional shift. We then show that our predictions are a good approximation for wide but finite networks; these predictions again hold OOD. We present learnability bounds for richer networks, including a standard transformer block. Finally, we argue natural language (NL) does have some permutation symmetry, based on an analysis of WikiText-2~\cite{merity_pointer_2016}.

Our main contributions are: \vspace{-0.4cm}
\begin{itemize}
    \item We give explicit analytical predictions for the outputs and generalization performance of a NN with linear attention at the GP limit, in distribution and OOD. We show how irreducible representations of the symmetric group can be built and used for to predict learnability in this case.\vspace{-0.1cm}
    
    \item We extend our results to a transformer block with standard ${\rm softmax}$ attention. We show experimentally the learnability bounds found based on the dimension of the relevant irreducible representations are tight. \vspace{-0.1cm}
    
    \item We analyze WikiText-2 and show evidence for an approximate permutation symmetry in its principal components, suggesting that the toolbox presented can be of use in natural language datasets.
\end{itemize}
\vspace{-0.5cm}

\section{Methods}

This section presents the model we study and the tools used to analyze it at the GP limit. We show how to implement these tools to reveal the inductive bias in terms of the space of \emph{expressible} functions and their \emph{learnability} by the NN.\vspace{-0.3cm}

\subsection{Model}
\label{subsec:model}

\subsubsection{Neural network architecture}
\label{subsubsec:NN arch}
We study a transformer-like NN with one transformer block, for simplicity, we do not include residual connections or layer normalization, although these can be added. The NN is made of an embedding layer with added learned positional encoding (PE) $\vec{p}$, one multi-head self-attention layer (MHA), an MLP with one hidden layer and a final linear readout layer.

The input to the NN is made out of $L+1$ tokens $\vec{x}^a$ indexed by an upper sequence index $a=1,2,...,L+1$ with each token having an internal vocabulary dimension indexed by a lower index $i$. We group these with a Greek letter sample index $\mu=1,2,...,N$ into a rank $3$ tensor $X^a_{i,\mu}$, where we drop the sample index $\mu$ when we discuss only a single sample. One-hot encoding is used for the tokens, such that $ [\vec{x}^a]_i$ = $\delta_{i,v}$ where $v = 1,...,\nvoc$ is the token represented by $\vec{x}^a$.
Denoting the input by $x^a_i$ and the output of $l$'th layer by $z^{(l),a}_i$ the resulting NN is \vspace{-0.1cm}
\begin{equation}
\begin{aligned}
z_{i}^{(1),a} & =W_{ij}^{{\rm emb}}x_{j}^{a}+p_{i}^{a}\\
z_{i,h}^{(2),a} & =\Phi\left(\frac{Q_{j,h}^{a}K_{j,h}^{b}}{\sqrt{d_k}}\right)V_{i,h}^{b} \quad \text{(no $h$ summation)}\\
& = \Phi\left(\frac{W_{lm,h}^{Q}z_{m}^{(1),a}W_{ln,h}^{K}z_{n}^{(1),b}}{\sqrt{d_k}}\right)W_{ij,h}^{V}z_{j}^{(1),b}\\
z_{i}^{(3),a} & =W_{ij,h}^{O}z_{j,h}^{(2),a}\\
z_{i}^{(4),a} & =\phi\left(W_{ij}^{\left(4\right)}z_{j}^{(3),a}+b_{i}^{\left(4\right)}\right)\\
z_{i}^{(5),a} & =W_{ij}^{\left(5\right)}z_{j}^{(4),a}+b_{i}^{\left(5\right)}\\
\end{aligned}
\label{eq:NN_def}
\end{equation}
\begin{equation*}
\begin{aligned}
f_{i}^{a}(X)=z_{i}^{(6),a} & =W_{ij}^{{\rm d-emb}}z_{j}^{(5),a}
\end{aligned}
\end{equation*}
using Einstein's summation convention, with $\Phi$ and $\phi$ being some activation functions\footnote{A common choice would be $\Phi={\rm softmax}$ acting on the $b$ index and $\phi={\rm ReLU}$}. The NN parameters 
\begin{equation}
\begin{aligned}
W^{\rm emb} &\in \mathbb{R}^{d_{m} \times N_{\rm voc}} ,& \vec{p}^{\,a} &\in \mathbb{R}^{d_{m} } \\ W^Q,W^K,W^V &\in \mathbb{R}^{d_k \times d_{m}} ,& W^O &\in \mathbb{R}^{d_{m} \times d_k \times N_{\rm heads}} \\ W^{(4)} &\in \mathbb{R}^{ d_{ff} \times d_{m} } ,& \vec{b}^{(4)} &\in \mathbb{R}^{ d_{ff}} ,~ \vec{b}^{(5)} \in \mathbb{R}^{ d_{m}} \\ W^{(5)} &\in \mathbb{R}^{ d_{m} \times d_{ff} } ,&  W^{\rm d-emb} &\in \mathbb{R}^{ N_{\rm voc} \times d_{m} }
\end{aligned}
\end{equation}
are all learned. 
For the MHA we use $N_{\rm heads}$ heads and the same dimension $d_k=d_v=d_{m}/N_{\rm heads}$ for keys, queries, and values. Lastly, for the hidden layer $z^{(4)}$ we use dimension $d_{ff}$ which is of the same order of magnitude as the model dimension $d_{ff} \sim d_{m}$. 
Notably, consecutive affine transformations can be combined together without loss of generality, but they are kept in this way to align with standard notation\footnote{Combining such affine transformations would also induce a different prior in finite-sized NNs as shown in~\citet{li_statistical_2021}.}. 

As an instructive example, we will use a linearized MHA\footnote{similar to the one suggested by \citet{von_oswald_transformers_2023,hron_infinite_2020} and recently studied by \citet{ahn_linear_2024}.} $\Phi(x)=\frac{1}{L+1} x$ and linear MLP $\phi(x)=x$, as this setting allows for closed-form analytical predictions at the GP limit. Note that because we remove the common ${\rm softmax}$ non-linearity we add a division by the length to make sure the network's output stays $O(1)$ and does not scale with $L$.

\subsubsection{Task, Loss Function, Initialization, and Training Protocol}
\label{subsec:task,loss,init,traning}
The task is a pretraining task, namely, predicting the conditional probability for the next token given the context $p(\vec{x}^{L+2}|X) $. For simplicity, we limit the discussion to predicting the next token probability from a full context window of length $L+1$, and looking only at the prediction for the unknown token, meaning we define $f(X) := f^{L+1}(X)$.

Mean square error (MSE) loss with weight decay is used. The weights are initialized according to LeCun initialization, meaning the weights in each layer are i.i.d with $w\sim\mathcal{N}(0,\frac{1}{\sqrt{\rm{fan-in}}})$, and the biases are initialized to zero. For the convenience of the analytical calculations, we will initialize the PE as Gaussian i.i.d entries $p^a_i \sim \mathcal{N}(0,1/2)$ for $a\neq L+1$, for the last token we will initialize the PE to zero $p^{L+1}_i=0$. 

The training is done with uncorrected Langevin dynamics~\cite{neal_probabilistic_1993}, that is, gradient descent with noise $\eta \sim N(0,\sigma^2)$ added to the gradients, as a model for stochastic gradient descent. As shown in \citet{mingard_is_2020,liu_noise_2021,mandt_stochastic_2018} the result is indicative of the outcome of SGD training. We adjust the noise and weight decay such that following training with no data the network's weights distribution is the same as at initialization, as described in \citet{naveh_predicting_2021}. From a Bayesian perspective, such training protocol samples from the posterior distribution of a Bayesian NN with prior induced by the weights' initialization distribution. 

\subsubsection{Datasets}

We use a mixture of hidden Markov models (HMMs)~\cite{baum_statistical_1966} as a dataset. The mixture of HMMs is chosen for its balance between aspects of language, like long-range dependencies and sensitivity to (elementary) context~\cite{xie_explanation_2021}, and analytical tractability. This setting also yields a well-defined concept of distributional shift, as the NN can be trained on a fraction of the mixture and tested on another.

A HMM is composed of two stochastic processes, $h^a$ and $x^a$, where $a$ is the time-step index. The process $h^a$ is dubbed ``hidden" while $x^a$ is the observed process. The hidden process is Markovian, with $d_{\rm hidden}$ different states. The observed process depends only on the hidden state at the same time, where each of the possible $N_{\rm voc}$ outputs is given a different probability under each hidden state.

HMMs are conveniently described by stochastic emission and transition matrices. The $i,j$ entry of the transition matrix $T \in \mathbb{R}^{d_{\rm hidden} \times d_{\rm hidden}}$ represent the transition probability from the $j$'th hidden state to the $i$'th. Similarly, the $i,j$ entry of emission matrix $O \in \mathbb{R}^{N_{\rm voc} \times d_{\rm hidden}}$ represent the probability to emit the $i$'th output in the vocabulary when in the $j$'th hidden state. 

Our dataset is a mixture of HMMs with $N_{\rm voc}=2$ and $d_{\rm hidden}=2$, where the emission probabilities that define the HMM $p,q$ are themselves drawn from uniform distributions $p \sim U(p_a,p_a +w),~q \sim U(q_a,q_a +w)$. The transition probabilities are constant and deterministic. The transition and emission probabilities for a HMM in the mixture are given in matrix form by
\begin{equation}
T = \begin{bmatrix}0 & 1\\ 1 & 0 \end{bmatrix} ; \quad 
O = \begin{bmatrix} p & q\\ 1-p & 1-q \end{bmatrix}.
\label{eq:T-O-def}
\end{equation}
Finally, the initial hidden state, $h^1$, is a random variable with equal probability for each of the two possible hidden states. 

As a primer for the discussion to follow, we point out that the probability distribution defined by an HMM is invariant to permutation of tokens outputted under the same hidden state. 
We re-examine this point in section \ref{sec:results} and present evidence for an approximate permutation symmetry in the principal components of WikiText.

\subsection{Theory}
\label{sec:theory}
First, we present the correspondence between an infinite transformer-like NN and a Gaussian process (GP), known as the neural network Gaussian process correspondence (NNGP)~\cite{lee_deep_2018,hron_infinite_2020}.
To discuss the inductive bias in terms of functions rather than individual samples, we study the resulting GP averaged over datasets and its spectral decomposition. The null space is identified as the space of inexpressible functions, and the eigenvalues are interpreted as a measure of learnability for their corresponding eigenfunctions.
Capitalizing on the permutation symmetry of the kernel eigenvalue problem, we use tools from representation theory to provide upper bounds on the scaling of eigenvalues with context length.

As shown in \citet{hron_infinite_2020} when $d_k,N_{\rm heads}\to \infty$ the distribution of NN outputs induced by the initialization distribution converges to a GP $f(x)\sim\mathcal{GP}(0,k)$ with $k$ being the kernel of the NN. The kernel $k(X,Y)$ is the covariance between samples given by 
\begin{equation}
k(X,Y)=\E_\Theta \left[ f_\Theta(X) f_\Theta(Y) \right],
\label{eq:kernel_mat_def}
\end{equation}
where $\Theta$ stands for the NN parameters, drawn from the initialization distribution defined in \ref{subsec:task,loss,init,traning}. Following our training protocol, this correspondence carries over to the trained NN, such that the trained NN is equivalent to the result of Bayesian inference with the NNGP as prior~\cite{naveh_predicting_2021}.

While exact GP inference is generally hard, we can get an insight into the learning process by looking at the continuum limit where GP regression is averaged over all possible datasets of size $N$. 
For a large $N$, a continuum kernel can be used in an approximation known as the equivalent kernel (EK)~\cite{silverman_spline_1984,sollich_using_2004,cohen_learning_2021}. 
We can define the EK integral operator $\hat{K}$ associated with the continuum kernel,
\begin{equation}
\hat{K} u (X) = \E_{Y \sim p_{\rm train}} \left[ k(X,Y) u (Y) \right],
\end{equation}
where $p_{\rm train}$ is the training distribution. 

The inductive bias can now be understood by looking at the expression for the predictor (the output of the NN, averaged over the possible initializations) under the EK approximation on the eigenbasis of $\hat{K}$ 
\begin{equation}
\E_\Theta \left[ f_\Theta(X_*) \right] = \sum_{i=1}^\infty \frac{\lambda_i}{\lambda_i + \sigma^2/N} g_{i} \varphi_i (X_*)
\label{eq:EK-pred-diag}
\end{equation}
where $\sigma^2$ is the variance of the noise added to the gradients, or the observation uncertainty from a Bayesian perspective; $\varphi_i$'s are the eigenfunctions; $\lambda_i$'s are the corresponding eigenvalues and $g_i$ is the projection of $g(x)$ on $\varphi_i$ given by the inner product 
\begin{equation}
\langle g(x), \varphi_i (x) \rangle_x = \E_{x\sim \pdata} \left[ g(x) \varphi_i (x) \right].
\label{eq:inner-product-def}
\end{equation} 
We can give \eqref{eq:EK-pred-diag} an intuitive interpretation: The architecture and dataset dictate both the \emph{learnability} and \emph{expressibility}. All eigenfunctions corresponding to $\lambda = 0$ will not be expressible by the NN. For the expressible eigenfunctions, learnability for the eigenfunction $\varphi_i$ is dictated by the quantity
\begin{equation}
    L_i := \left| \frac{\langle f(x), \varphi_i(x) \rangle_{x \sim p}}{\langle y(x), \varphi_i(x) \rangle_{x\sim p}} \right| = \frac{\lambda_i}{\lambda_i+\sigma^2/N}.
    \label{eq:learnability_def}
\end{equation}
Where the last equality is given by using the EK predictor for $f(x)$ calculated on the same distribution $p(x)$ as the inner product, recovering the result of ~\cite{simon_eigenlearning_2023} (for effective ridge $\delta=\sigma^2$). With this matching EK predictor, the learnability $0\leq L \leq 1$ acts as a filter, passing almost nothing when $N \ll \sigma^2 \lambda^{-1}$ and passing the information from the target almost perfectly when $N \gg \sigma^2 \lambda^{-1}$. We see that learning the feature $\varphi_i(x)$ requires $N \simeq \sigma^2 \lambda^{-1}$ samples, predicting a performance improvement at that scale. 

Accordingly, predicting the learning of the NN is reduced to solving the eigenvalue problem for the EK kernel corresponding to the NN
\begin{equation}
\hat{K} \varphi_i (X) = \E_{Y \sim p_{\rm train}} \left[ k(X,Y) \varphi_i (Y) \right] = \lambda \varphi_i (X),
\label{eq:eigenproblem}
\end{equation}
and finding the projections of the target on the eigenbasis.

We now turn to use symmetry to simplify the eigenvalue problem. The fact that the NN defined in \ref{subsec:model}, never explicitly acts in sequence space (that is, the weights do not carry a sequence index) and the PE is drawn i.i.d guarantees a permutation symmetry between all the tokens but the last one\footnote{This is also true for the NTK~\cite{jacot_neural_2018} and we expect similar results to hold in that setting.}~\footnote{the last token is ``signaled out" at inference time as the only one who's output is desired. For a single transformer block, it is therefore sufficient to use only the last token as a query, and it is thereby not symmetric to the other tokens.}.

\subsubsection{Symmetry and representation theory}
\begin{figure}
    \centering
    \includegraphics[width=.68\columnwidth]{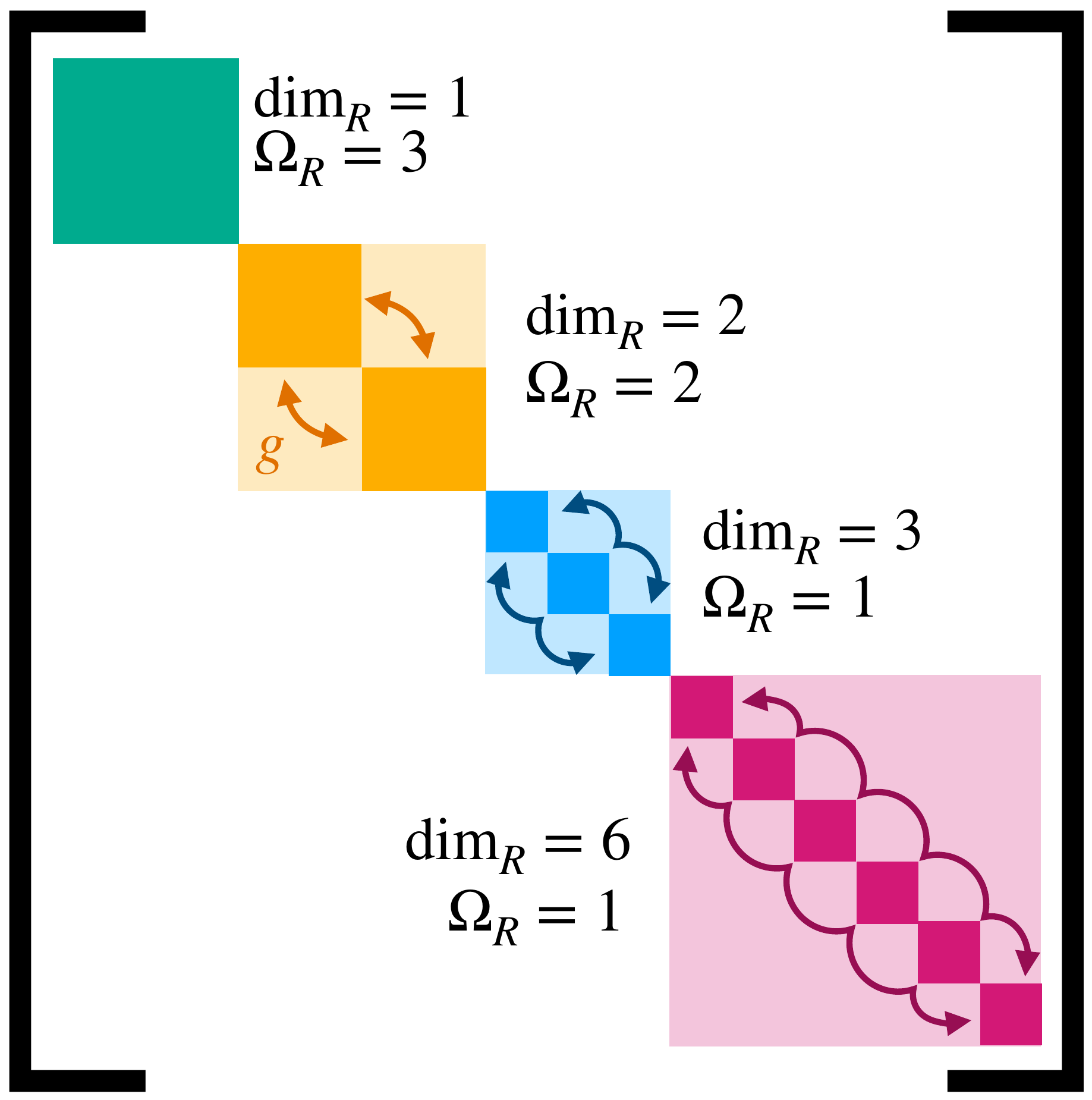}
     \caption{{\bf (Illustration of diagonalization using symmetries)} The figure illustrates the direct sum (block) structure described in Prop. \ref{prop:rep-theory}. Each color-shaded block represents an irrep, and each solid color represents a multiplicity block within the irrep. All elements outside the multiplicity blocks vanish, both between different irreps and within an irrep. 
     The symmetry actions $g \in G$ can mix multiplicity blocks as indicated by the arrows. Since all multiplicity blocks inside an irrep are linked by the symmetry actions they are all degenerate
     .}
     \label{fig:irrep_sym_illustartion}
\end{figure}
We start with an intuitive understanding of the role of symmetries and give a precise formulation later in this section. A fuller introduction and examples are given in Appendix~\ref{appendix:sym-group-irreps}. For a simple example where our use of representation theory amounts to a simple discrete Fourier transform, and introduction to permutation symmetry in appendix ~\ref{appendix:rep_fourier_intro}.

Symmetries can greatly simplify the eigenvalue problems like ~\eqref{eq:eigenproblem} above. We say an operator like $\hat{K}$ is symmetric under the action of a group $G$ if
\begin{equation}
    \forall g \in G, ~~ 
    k(\vec{x}_g, \vec{y}_g) = k(\vec{x}, \vec{y}) ~\&~  \pdata(\vec{x}_g)=\pdata(\vec{x}),
    \label{eq:intuitive_sym}
\end{equation}
where $\vec{x}_g$ is the result of acting with a symmetry action $g$ on $\vec{x}$, e.g. rotating $\vec{x}$ or permuting the entries of $\vec{x}$. 
Such an action is formalized through a \emph{representation} of the group, we give a precise definition in Prop.~\ref{prop:rep-theory}
.
A symmetry, as described in \eqref{eq:intuitive_sym}, means we are allowed to act with a symmetry action $g \in G$ but our model will stay invariant to this action. In the context of the eigenvalue problem in~\eqref{eq:eigenproblem}, such an action can be viewed as mixing different eigenfunctions $\varphi_i(x)$ (say by rotating the inputs $x$, such that the outputs $\varphi_i (\vec{x}_g)$ overlaps with $\phi_j(x)$ for $i\neq j$) without changing the eigenvalues.
This scenario implies, that all the eigenvalues of the mixed eigenfunctions must be identical, i.e. degenerate. Moreover, all eigenfunctions must either be members of such degenerate blocks, or in the simplest case be transformed back to themselves under the action of the symmetry group. See Fig~\ref{fig:irrep_sym_illustartion}.

If we study precisely how a symmetry group mixes the functions, we can identify the above-mentioned blocks in the space of expressible functions. The blocks would be a property of the symmetry group itself and would hold for any kernel satisfying \eqref{eq:intuitive_sym}.
Formally, the blocks correspond to the irreps of the group over the space of expressible functions (see Prop.~\ref{prop:rep-theory}). These can be understood as the minimal spaces of functions that mix with one another. The functions in those spaces cannot be "untangled" under the symmetry, hence the name irreducible.
%
%
\begin{proposition}
    Recalling results from \citet{tung_group_1985,fulton_representation_2004}.
    Given linear transformations $\{T_g | g \in G\}$ which constitute a representation of $G$ ($\forall g_1,g_2 \in G,~~ T_{g_1 g_2}=T_{g_1} T_{g_2}$) and a model symmetric under the action of a group $G$,
    i.e. 
    satisfying \eqref{eq:intuitive_sym} with $x_g = T_g x$. 
    {\bf It holds that}:
    The kernel operator can be decomposed into a direct sum, where each summand corresponds to an irrep of $G$ (shaded blocks in Fig.\ref{fig:irrep_sym_illustartion}). For an irrep $R$ that appears $\Omega_R$ times in $\hat{K}$ (said to have a multiplicity $\Omega_R$), each such block consists of $\Omega_R$ different eigenvalues, each with $m$-fold degeneracy, equal to the dimension of the irrep (${\rm dim}_R$).
    {\bf As a corollary}, each irrep of multiplicity $1$ gives exact eigenvectors of the kernel. For an irrep of multiplicity $\Omega_R$, finding the spaces of the irrep allows one to diagonalize in the $\Omega_R \times \Omega_R$ (multiplicity) space for each irrep individually; these spaces are guaranteed not to mix different irreps under the kernel.
\label{prop:rep-theory}
\end{proposition}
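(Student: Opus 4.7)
The plan is to prove this as a direct application of Schur's lemma, in three steps: (i) show that $\hat{K}$ commutes with the representation $\{T_g\}$ on the space of expressible functions, (ii) invoke the canonical isotypic decomposition of that function space, and (iii) apply Schur to read off the block structure, degeneracies, and multiplicities.

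First, I would let $\hat{T}_g$ denote the induced action on functions, $(\hat{T}_g u)(X) := u(T_{g^{-1}} X)$, which is a unitary representation on $L^2(\pdata)$ because $\pdata$ is $G$-invariant by \eqref{eq:intuitive_sym}. I would then verify the commutation relation $\hat{K}\hat{T}_g = \hat{T}_g \hat{K}$ by a single change-of-variables computation:
\begin{equation}
(\hat{K}\hat{T}_g u)(X) = \E_{Y \sim \pdata}\!\left[k(X,Y)\, u(T_{g^{-1}}Y)\right] = \E_{Y' \sim \pdata}\!\left[k(X, T_g Y')\, u(Y')\right],
\end{equation}
where I substituted $Y = T_g Y'$ and used that $\pdata$ is $G$-invariant. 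Applying $k(T_{g^{-1}}X, Y') = k(X, T_g Y')$ (the kernel invariance in \eqref{eq:intuitive_sym}) then rewrites the right-hand side as $(\hat{T}_g \hat{K} u)(X)$, establishing equivariance.

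Next, I would invoke the standard isotypic decomposition from \citet{fulton_representation_2004}: the space of expressible functions decomposes as $\mathcal{F} = \bigoplus_R V_R \otimes \mathbb{C}^{\Omega_R}$, where $R$ runs over inequivalent irreps, $V_R$ carries the irrep action, and $\mathbb{C}^{\Omega_R}$ is the multiplicity space on which $G$ acts trivially. Because $\hat{K}$ commutes with $\hat{T}_g$ and preserves each isotypic component, Schur's lemma forces $\hat{K}|_{V_R \otimes \mathbb{C}^{\Omega_R}}$ to act as $\mathrm{id}_{V_R} \otimes M_R$ for some operator $M_R$ on $\mathbb{C}^{\Omega_R}$. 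Diagonalizing the finite-dimensional $M_R$ yields exactly $\Omega_R$ eigenvalues, each appearing with degeneracy $\dim_R$ coming from the tensor factor $V_R$; this is the block structure in Fig.~\ref{fig:irrep_sym_illustartion}. The corollary follows immediately: if $\Omega_R = 1$, then $M_R$ is a scalar and any basis of $V_R$ gives exact eigenfunctions; for $\Omega_R > 1$, locating the isotypic component reduces the original (possibly infinite-dimensional) eigenproblem to the finite $\Omega_R \times \Omega_R$ diagonalization of $M_R$, and distinct irreps cannot mix because $\hat{K}$ is block-diagonal across them.

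The main obstacle I anticipate is technical rather than structural: carefully justifying that the isotypic decomposition applies to the (a priori infinite-dimensional) space of expressible functions and that $\hat{K}$ is well-defined and compact enough for the block operators $M_R$ to be simultaneously diagonalizable. For finite $G$ (such as $S_L$ here), Peter--Weyl / Maschke-type reasoning on $L^2(\pdata)$ handles this cleanly, so I would either restrict attention to the finite-group setting already in use or cite the appropriate functional-analytic extension. All other steps are routine once commutation is established.
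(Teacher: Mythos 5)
Your proposal is correct and follows essentially the same route the paper relies on: the paper states this proposition by citing standard references (Tung; Fulton--Harris) and sketches exactly the commutation-of-$\hat{K}$-with-$\hat{T}_g$ plus irrep-decomposition argument in its Appendix A, and your Schur's-lemma / isotypic-decomposition write-up ($\hat{K}|_{V_R\otimes\mathbb{C}^{\Omega_R}} = \mathrm{id}_{V_R}\otimes M_R$) is precisely that standard argument made explicit. The only harmless looseness, shared by the proposition's own wording, is that the $\Omega_R$ eigenvalues of $M_R$ are only generically distinct, and that diagonalizability of each $M_R$ uses self-adjointness of $\hat{K}$ (symmetry of the covariance kernel), which is worth stating rather than leaving implicit.
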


Going back to a more intuitive level, multiplicity means different sets of functions mix in the same way, but not between themselves. To separate these sets into eigenspaces the eigenvalue problem in the $\Omega_R \times \Omega_R$ multiplicity space needs to be solved in other means, but we are guaranteed we need to solve it in only one such multiplicity block, as all blocks are guaranteed to be degenerate (one solid color square of each color in Fig~\ref{fig:irrep_sym_illustartion}).

Degeneracy not only allows us to simplify the problem but also to give an asymptotic upper bound on the eigenvalues. Mercer's theorem~\cite{konig_eigenvalue_1986} guarantees $\hat{K}$ has a finite trace, which can be thought of as a fixed budget. Since all the eigenvalues are positive, they must share this fixed budget; leading to Prop.~\ref{prop:fixed_trace}.
\begin{proposition}
    Under the same conditions as Prop \ref{prop:rep-theory} and given the kernel is normalized, the trace is given by \vspace{-0.2cm}
    \begin{equation}
    \E_{x \sim \pdata} \left[ k(x,x) \right] \simeq 1 .
    \end{equation} 
    An eigenvalue $\lambda$ belonging to a space corresponding to an irrep $R$, is bound from above, $\lambda = O({\rm dim}_R^{-1})$ where ${\rm dim}_R$ is the dimension of $R$.
\label{prop:fixed_trace}
\end{proposition}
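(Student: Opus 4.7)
The plan is to combine Mercer's theorem on the trace of a positive operator with the block-diagonal structure guaranteed by Proposition~\ref{prop:rep-theory}. Mercer's theorem identifies $\mathrm{tr}(\hat{K}) = \sum_i \lambda_i$ with $\E_{x \sim \pdata}[k(x,x)] \simeq 1$, and since $\hat{K}$ is a covariance operator of a Gaussian process it is positive semi-definite, so every eigenvalue is non-negative. The symmetry-induced block structure then lets this single scalar bound be transferred into a sharp per-irrep inequality, essentially by arguing that each irrep $R$ takes up an amount of the trace ``budget'' proportional to $\mathrm{dim}_R$ times its eigenvalues.

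Concretely, I would first invoke Proposition~\ref{prop:rep-theory} to partition the spectrum of $\hat{K}$ according to irreps: for each irrep $R$ appearing with multiplicity $\Omega_R$, the corresponding block contributes $\Omega_R$ eigenvalues $\lambda_{R,1},\ldots,\lambda_{R,\Omega_R}$, each with $\mathrm{dim}_R$-fold degeneracy, since all $\mathrm{dim}_R$ multiplicity blocks inside one irrep are mapped into one another by the group action and must share a common eigenvalue. Substituting this counting into the trace identity gives
\begin{equation}
\sum_R \mathrm{dim}_R \sum_{k=1}^{\Omega_R} \lambda_{R,k} \;\simeq\; 1.
\end{equation}
Because every summand is non-negative, I can discard all terms except a single one to obtain $\mathrm{dim}_R \cdot \lambda_{R,k} \leq 1 + o(1)$ for every irrep $R$ and every $k \in \{1,\ldots,\Omega_R\}$, which is precisely $\lambda_{R,k} = O(\mathrm{dim}_R^{-1})$.

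The main technical point is justifying the trace identity itself: Mercer's theorem requires $\hat{K}$ to be a compact, trace-class operator on $L^2(\pdata)$ with summable positive spectrum. In the present setting this is essentially automatic because the input domain is finite ($\nvoc$ and $L$ are finite), so $\hat{K}$ is literally a finite positive semi-definite matrix weighted by the discrete measure $\pdata$, and $\sum_i \lambda_i = \E_{x\sim\pdata}[k(x,x)]$ is the matrix trace written in the eigenbasis. One caveat I would flag rather than an obstacle is that the bound is tight only when the budget is not largely absorbed by other irreps of comparable dimension; an eigenvalue in a low-multiplicity, low-dimension irrep can saturate it, whereas eigenvalues in high-multiplicity irreps typically sit well below $\mathrm{dim}_R^{-1}$, so the result should be read as a worst-case upper bound consistent with the learnability scaling laws discussed earlier.
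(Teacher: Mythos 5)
Your proposal is correct and follows essentially the same route as the paper: Mercer's theorem fixes the trace budget $\E_{x\sim\pdata}[k(x,x)]\simeq 1$, positivity of the spectrum lets you drop all other terms, and the $\dim_R$-fold degeneracy forced by Proposition~\ref{prop:rep-theory} turns the budget into $\dim_R\,\lambda \leq 1$, i.e.\ $\lambda = O(\dim_R^{-1})$. Your explicit sum over irreps and multiplicities, and the remark that the finite input domain makes the trace-class condition automatic, merely spell out details the paper leaves implicit.
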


Since the kernel's trace is fixed to unity, and all eigenvalues in an irrep are degenerate (equal), the eigenvalues are upper bounded by one over the number of such eigenvalues, as they have to sum up to at most $1$.

We can now state formally the symmetry of our model as symmetry under the action of the symmetric group in $L$ symbols $S_L$ i.e. $k(T_{s_L}\,X, T_{s_L}\,Y)=k(X,Y)$ where $T_{s_L}$ is a representation of any element $s_L \in S_L$ that acts naturally on the sequence index~\footnote{We note this is a symmetry of the prior distribution and this is all that is required for our theory. The posterior distribution need not have this symmetry, as is often the case with learned positional encoding.}. Following Prop. \ref{prop:rep-theory},\ref{prop:fixed_trace} and the symmetry manifested in the model, we are interested in the irreps of the symmetric group.

Irreps of the symmetric group $S_L$ are uniquely labeled by partitions of $L$ to integers, written as ordered sets from the largest part to the smallest, such that the sum of the parts is $L$. For example, a partition to $L$ could be $(L-1,1)$ or $(L-4,2,1,1)$. To decompose the space of expressible functions we use the extensive literature on the representations of the symmetric group; a less formal introduction is given in Appendix~\ref{appendix:sym-group-irreps}, and a formal treatment is given in Appendix~\ref{appendix:irreps_theorem}.

Since the input is one-hot encoded, every target function will be a multilinear polynomial in the input tokens; that is, fixing all other variables we will remain with a linear function of $x_i^a$ for some particular $a,i$. This fact can be seen by considering each variable $x_i^a$ can only take on values $\{0,1\}$ so $(x_i^a)^n = x_i^a$ for $0< n \in \mathbb{Z}$. We thus wish to consider the decomposition of multilinear polynomials to irreps of the symmetric group.

\begin{restatable}{thm}{irrepsOfMultilinear}
\label{thm:irreps of multilinear}
The space of homogeneous multilinear polynomials in $n$ variables of degree $d$ can be \emph{fully} decomposed into $\min \{d+1,n-d+1 \}$ \emph{unique} irreps of $S_n$ labeled by the partitions $(n-k,k)$ for $0 \leq k \leq d,n-d$. 
\end{restatable}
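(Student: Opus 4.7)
The plan is to identify the space $V_{n,d}$ of homogeneous multilinear polynomials of degree $d$ in $n$ variables with the permutation module $M^{(n-d,d)}$ of $S_n$, and then invoke the classical decomposition of that module into Specht modules associated to two-row partitions.

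First, I would set up the identification. A basis of $V_{n,d}$ is given by the monomials $x_{i_1} x_{i_2} \cdots x_{i_d}$ with $1 \le i_1 < \cdots < i_d \le n$, which are in bijection with the $d$-element subsets of $[n] = \{1,\ldots,n\}$. The natural action of $S_n$ by permuting variables coincides, under this bijection, with the permutation action on $d$-subsets, giving an isomorphism of $S_n$-modules $V_{n,d} \cong M^{(n-d,d)}$, where $M^{(n-d,d)}$ is the permutation module on tabloids of shape $(n-d,d)$.

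Second, I would apply (or derive) the decomposition
\begin{equation}
M^{(n-d,d)} \;\cong\; \bigoplus_{k=0}^{\min\{d,\, n-d\}} S^{(n-k,k)},
\label{eq:tworow}
\end{equation}
where $S^{(n-k,k)}$ is the Specht module for the two-row partition $(n-k,k)$, each appearing with multiplicity one. This is a well-known special case of Young's rule and can be proved self-contained by introducing the equivariant ``boundary'' maps $\partial_k : M^{(n-k,k)} \to M^{(n-k+1,k-1)}$ that send a $k$-subset $S$ to the formal sum $\sum_{i \in S} (S \setminus \{i\})$; one shows $\partial_k$ is surjective for $k \le n/2$, identifies $S^{(n-k,k)}$ with $\ker \partial_k$, and iterates to obtain the filtration whose successive quotients are the Specht modules in \eqref{eq:tworow}.

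Third, I would tally the summands. The number of terms in \eqref{eq:tworow} is $\min\{d, n-d\} + 1 = \min\{d+1, n-d+1\}$, and the labels $(n-k,k)$ with $0 \le k \le \min\{d, n-d\}$ coincide with the partitions stated in the theorem; multiplicity one in \eqref{eq:tworow} gives uniqueness. The edge cases $d=0$ (yielding only the trivial $S^{(n)}$) and $d > n/2$ should be verified: in the latter case the bijection $\binom{[n]}{d} \leftrightarrow \binom{[n]}{n-d}$ shows $V_{n,d} \cong V_{n,n-d}$ as $S_n$-modules, so the decomposition matches \eqref{eq:tworow} with the upper limit $n-d$, consistent with the $\min\{d+1, n-d+1\}$ count.

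The main obstacle is justifying \eqref{eq:tworow} rigorously. Either one cites Young's rule for two-row shapes directly (standard in the representation theory of $S_n$, e.g.\ Sagan or Fulton--Harris), or one carries out the explicit boundary-map argument sketched above, which requires checking surjectivity of $\partial_k$ and irreducibility of each $\ker \partial_k$. Once \eqref{eq:tworow} is in hand the rest is bookkeeping; I would relegate the full verification of \eqref{eq:tworow} to an appendix and state the main theorem as a direct consequence.
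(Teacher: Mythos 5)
Your proposal is correct, but it takes a genuinely different route from the paper. You identify the span of degree-$d$ multilinear monomials with the permutation module on $d$-subsets of $[n]$ (tabloids of shape $(n-d,d)$, with the complementation bijection handling $d>n/2$) and then invoke Young's rule for two-row shapes, equivalently the Kostka numbers $K_{(n-k,k),(n-d,d)}=1$ for $0\le k\le\min\{d,n-d\}$, or your self-contained filtration via the boundary maps $\partial_k$. The paper instead works directly with Young symmetrizers acting on monomials: it proves cancellation lemmas (a column transposition fixing a monomial forces the column anti-symmetrizer to annihilate it, hence all shapes with more than two rows or with second row longer than $\min\{d,n-d\}$ kill the whole space), exhibits an explicit monomial surviving the canonical Young symmetrizer for each admissible shape $(n-k,k)$, and then closes the multiplicity-one claim by a dimension count using the hook-length formula, $\sum_{k=0}^{\min\{d,n-d\}}\dim_{(n-k,k)}=\binom{n}{d}$. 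Your route is shorter and more conceptual, and it yields multiplicity one in a single stroke without the final dimension tally; its cost is that the key input (Young's rule, or surjectivity of $\partial_k$ together with the identification of $\ker\partial_k$ with the Specht module $S^{(n-k,k)}$) is either cited or requires a nontrivial verification you only sketch, as you acknowledge. The paper's route is more elementary and self-contained modulo the single cited fact that Young symmetrizers project onto irrep spaces, and its by-products are reused elsewhere: the explicit surviving polynomials underlie the concrete basis constructions in Appendix B, and the hook-length computation supplies the $\dim_{(n-k,k)}\sim n^k$ scaling that feeds the learnability bounds in the main text. Either argument establishes the theorem; if you flesh out the boundary-map lemma (surjectivity for $k\le n/2$ and irreducibility of the kernel) or cite Young's rule precisely, your proof is complete.
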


See proof in appendix \ref{appendix:irreps_theorem}. We can therefore expand any analytic function into polynomials and decompose them into the irreps of the symmetric group.

The dimension of the $k$'th irrep (${\rm dim}_k$) of the form $(L-k,k)$ scale as ${\rm dim}_k = \frac{L!}{k! \frac{(L-k+1)!}{L-2k+1}} \sim L^k$. We can now  quantitatively define a measure for symmetry to permutations: the more symmetric a function is, the less it may mix with other functions, and the smaller the dimension of the irreps it belongs to (smaller $k$). 
Going back to the definition of learnability in~\eqref{eq:learnability_def} we see that the number of samples required to learn a function in the representation $(L-k,k)$ is asymptotically bounded from below by\vspace{-0.2cm}
\begin{equation}
    N \simeq \lambda^{-1}_{(L-k,k)} \sigma^2 = \Omega(L^{k}).
\end{equation}
We therefore see that the more symmetric a function is to permutations (smaller $k$) the more learnable it is.

\subsection{Example: Linear Activations}
In this example, we choose $\Phi(x)=\frac{1}{L+1} x$ and linear MLP $\phi(x)=x$, as previously noted in \ref{subsubsec:NN arch} and solve the eigenvalue problem presented in the previous section. Note the linear activation functions $\Phi,\phi$ do not imply a linear NN as the attention layer is inherently non-linear. While this example is a minimal transformer like NN, our dataset already goes beyond the landscape of complete permutation invariance and demonstrates how the tools presented above can be adapted to richer datasets where the permutation invariance is partially broken.

\subsubsection{Expressibility}
\label{subsec:Expressibility}
First, we want to identify the space of functions spanned by $\varphi_i$ with $\lambda_i\neq 0$, the space of expressible functions. 
\begin{claim}
    The space of functions expressible by the model stated in section \ref{subsec:model} is spanned by the linear functions of $\{x^a_1\}_{a=1}^L$ multiplied by linear functions of $x^{L+1}_1$, which is a $2L+2$ dimensional space.
\end{claim}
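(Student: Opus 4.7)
The plan is to establish a tight upper bound on the expressible polynomial functions of $X$ and then argue tightness. Two structural facts drive the argument: (i) the one-hot encoding with $\nvoc=2$ forces any polynomial in $(x^a_1, x^a_2)$ to reduce to an affine function of the single scalar $x^a_1$, since $x^a_2 = 1 - x^a_1$ and $(x^a_j)^n = x^a_j$ for $n\geq 1$; and (ii) with the linearization $\Phi(z) = z/(L+1)$ and $\phi = \mathrm{id}$, the network output is a particular low-degree polynomial in the pre-PE embeddings $z^{(1),a}$.

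First, I would trace the network to identify the polynomial degree in each $z^{(1),b}$. Reading off \eqref{eq:NN_def}, the attention output at position $L+1$ has the form $z^{(2),L+1}_{i,h} \propto \sum_b (Q^{L+1}_h)^T K^b_h\, V^b_{i,h}$, which is linear in $z^{(1),L+1}$ and quadratic in $z^{(1),b}$. All remaining layers (output projection, linear MLP with biases, de-embedding) are affine in their inputs and hence preserve this functional form. So for any fixed $\Theta$, $f_\Theta(X)$ is a sum over $b \in \{1,\ldots,L+1\}$ of terms that are affine-in-$z^{(1),L+1}$ times quadratic-in-$z^{(1),b}$, with the self-term $b=L+1$ being cubic in $z^{(1),L+1}$.

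Second, I would substitute $z^{(1),b} = W^{\rm emb} x^b + p^b$ (with $p^{L+1}=0$) and treat $\Theta$ as fixed so that $p^b$ is a constant. Each summand becomes a polynomial of degree $\leq 2$ in the entries of $x^b$ times one of degree $\leq 1$ in $x^{L+1}$. Applying the one-hot reduction from (i) to $x^b$ and to $x^{L+1}$ separately, each cross term collapses to (affine in $x^b_1$) $\times$ (affine in $x^{L+1}_1$); the self-term $b=L+1$ is cubic in $x^{L+1}$ alone and also reduces to affine in $x^{L+1}_1$. Summing over $b$ and absorbing constants, $f_\Theta(X)$ lies in the span of $\{1,\, x^{L+1}_1\} \cup \{x^b_1,\, x^b_1 x^{L+1}_1 : 1 \leq b \leq L\}$, a set of $2L+2$ elements. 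Note in particular that no cross-token monomials $x^a_1 x^b_1$ with $a,b\leq L$ can appear, because the attention sum pairs $z^{(1),L+1}$ with exactly one other position $b$ at a time.

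Finally, for tightness I would show each of these $2L+2$ basis functions is genuinely realized, so the spanning set actually forms a basis of the expressible subspace. The natural route is to compute the kernel $k(X,Y) = \E_\Theta[f_\Theta(X) f_\Theta(Y)]$ directly in this basis (tractable because $f_\Theta$ is polynomial in $\Theta$ and the relevant parameter moments are Gaussian and independent across $W^{\rm emb}$, $W^Q, W^K, W^V$, $W^O$, the MLP weights, and the PE) and verify that its Gram matrix under $\pdata$ has rank exactly $2L+2$. I expect this tightness step to be the main obstacle: the upper bound is a clean degree-plus-one-hot argument, but ruling out accidental cancellations between contributions from different heads and different positions requires the explicit kernel computation or an analogous parameter-richness argument exhibiting configurations that isolate each basis function.
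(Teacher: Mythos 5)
Your proposal is correct, but it reaches the containment direction by a different route than the paper. The paper first computes the NNGP kernel in closed form (Eq.~\ref{eq:full kernel}), uses the one-hot identity $(\vec{x}^a\cdot\vec{y}^{\,b})^n=\vec{x}^a\cdot\vec{y}^{\,b}$ to reduce the squares to linear terms, and then invokes the representer-theorem argument: any eigenfunction with $\lambda_i\neq 0$ must lie in ${\rm span}\{k(X,\cdot)\}$, which is visibly contained in $\{1,x^{L+1}_1\}\otimes\{1,x^1_1,\dots,x^L_1\}$. You instead work at the level of the sample paths $f_\Theta$, tracing the linearized attention and affine layers to bound the polynomial degree (linear in $z^{(1),L+1}$, quadratic in each $z^{(1),b}$, paired one position at a time) and then applying the one-hot reduction token by token. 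Your route is more elementary for the upper bound — it needs no closed-form kernel, it yields the slightly stronger statement that \emph{every} prior sample lies in the $2L+2$-dimensional space, and it makes transparent why no cross-context monomials $x^a_1 x^b_1$ with $a,b\leq L$ can appear; note only that to match the paper's definition of expressibility (eigenfunctions of $\hat K$ with $\lambda\neq 0$) you should add the one-line bridge that $k(X,Y)=\E_\Theta[f_\Theta(X)f_\Theta(Y)]$ then forces $k(\cdot,Y)$, and hence every non-null eigenfunction, into the same span. The paper's route buys the explicit kernel, which is needed anyway for the learnability analysis, and its tightness (that the span is all $2L+2$ dimensions rather than a proper subspace) is settled by the subsequent explicit diagonalization — trivial irrep with multiplicity $6$ plus standard irrep with multiplicity $4$, giving $6+4(L/2-1)=2L+2$ generically nonzero eigenvalues (Eqs.~\ref{eq:k_vectors}, \ref{eq:appndx_k_eigvals}, \ref{eq:phi_base_def}) — which is precisely the Gram-rank verification you defer as the ``main obstacle''; so your plan for that step coincides with what the paper actually carries out.
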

The kernel function corresponding to our NN is given by
\vspace{-0.3cm}
\begin{equation}
k(X,Y)= \frac{1}{8} \vec{x}^{L+1} \cdot \vec{y}^{L+1} \frac{1}{(L+1)^2} \sum_{a,b=1}^{L+1} \left( \vec{x}^{a} \cdot \vec{y}^{\,b} + \delta^{a,b} \right)^2 .
\label{eq:full kernel}
\end{equation}
One-hot encoding not only implies multilinearity of the outputs, but also guarantees multilinearity of the kernel in the inner product of two vectors $(\vec{x}^a \cdot \vec{y}^{\,b})^n = (\vec{x}^a \cdot \vec{y}^{\,b})$ for $0< n \in \mathbb{Z}$. In this example, it means only linear terms in the context window $a,b=1,...,L$ are present.

We can further restrict the model's expressibility in our case, by considering large context windows $L \gg 1$. In that case, we can approximate the kernel given in \eqref{eq:full kernel} by
summing only up to $L$, and dropping sub-leading contributions in $\frac{1}{L}$. We show these indeed give only sub-leading corrections in appendix \ref{appendix:corrections-xL+1}. 
These simplifications result in
\begin{equation}
    k(X,Y)=  \underbrace{\vec{x}^{L+1} \cdot \vec{y}^{L+1}}_\mathfrak{A} \frac{1}{8 L^2} 
    \underbrace{\sum_{a,b=1}^{L} \left( \vec{x}^{a} \cdot \vec{y}^{\,b} + \frac{\vec{x}^{a} \cdot \vec{y}^{\,a}}{L} +\frac{1}{L} \right)}_\mathfrak{B} .
\label{eq:simplified-kernel}
\end{equation}
Finally, Since our particular model uses a vocabulary of size $2$ the entries of a one-hot vector are completely determined by one another $x^a_2 = 1-x^a_1$, allowing us to write it using only the first entry.
As can be seen in \eqref{eq:eigenproblem}, the only $X$ dependence in the l.h.s comes from the kernel $k(X,Y)$, thus for the equality to hold for every $X$, the eigenfunction $\varphi_i (X)$ of $\lambda_i \neq 0$ must be in the space of functions spanned by $k(X,\cdot)$. For example, if $k(X,Y)$ is linear in $X$ only linear functions will be expressible.
Based on this argument (formally given by the representer theorem~\cite{scholkopf_generalized_2001}), we may conclude the space of expressible functions is spanned by linear functions of $\{x^a_1\}_{a=1}^L$ multiplied by linear functions of $x^{L+1}_1$; a space of dimension $2L+2$.

\subsubsection{Learnability}
\label{subsec:Learnability}
Moving from expressibility to learnability requires knowledge of the full spectrum of the kernel. While this problem is generally hard, we will use the representation theory tools developed above to simplify it. We would like to decompose the zeroth and first-degree multilinear polynomial to irreps.

\begin{claim}
    The space of expressible functions of the model described above can be decomposed into irreps as follows.
    A trivial representation (${\rm dim} = 1$) of multiplicity $6$ and a standard representation 
    (${\rm dim} = L/2-1$) of multiplicity $4$.
\end{claim}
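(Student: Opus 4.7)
The key observation is that the kernel in \eqref{eq:simplified-kernel} is invariant under the full symmetric group $S_L$ acting on the positions $a=1,\ldots,L$, but the training distribution is not. Because the transition matrix $T$ in \eqref{eq:T-O-def} is the deterministic swap, odd positions and even positions are locked into different hidden states: conditional on $h^1$, odd-position emissions are i.i.d.\ with probability $p$ and even-position ones with probability $q$. After marginalizing over $h^1$ and $(p,q)$, the distribution is invariant only under the smaller subgroup $G = S_{L/2}^{\mathrm{odd}} \times S_{L/2}^{\mathrm{even}}$. Since the EK operator in \eqref{eq:eigenproblem} couples the kernel with $p_{\mathrm{train}}$, the symmetry to feed into Prop.~\ref{prop:rep-theory} here is $G$, not $S_L$.

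Starting from the $2L+2$-dimensional basis of expressible functions identified in the previous claim, $\{1,\; x^a_1,\; x^{L+1}_1,\; x^a_1 x^{L+1}_1\}$ for $a=1,\ldots,L$, I would split each $a$-indexed family along parity into two pieces of size $L/2$. Because $x^{L+1}_1$ is held fixed by $G$, each such half is a copy of the natural (permutation) representation of one of the two $S_{L/2}$ factors, tensored with the trivial representation of the other.

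The decomposition then follows from the classical fact that the natural representation of $S_m$ on $\mathbb{R}^m$ splits as trivial $\oplus$ standard, with the standard summand of dimension $m-1$. Applying this with $m=L/2$ to each of the four parity families $\{x^a_1\}_{a\,\mathrm{odd}}$, $\{x^a_1\}_{a\,\mathrm{even}}$, $\{x^a_1 x^{L+1}_1\}_{a\,\mathrm{odd}}$, $\{x^a_1 x^{L+1}_1\}_{a\,\mathrm{even}}$ extracts one trivial and one $(L/2-1)$-dimensional standard piece from each. Together with the two constant directions $\{1\}$ and $\{x^{L+1}_1\}$, this yields $6$ copies of the trivial irrep and $4$ copies of a $(L/2-1)$-dimensional irrep, with dimension check $6 + 4(L/2-1)=2L+2$ matching the previous claim.

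The main conceptual obstacle is identifying $G$ as the correct symmetry group: the dimension $L/2-1$ of the standard piece is the tell, since under the naive $S_L$ one would instead find $4$ trivials and $2$ standards of dimension $L-1$. A mild subtlety is that, as genuine irreducibles of $G=S_{L/2}\times S_{L/2}$, the standard-type summands fall into two distinct isomorphism classes, $(\mathrm{std},\mathrm{triv})$ and $(\mathrm{triv},\mathrm{std})$, each appearing with multiplicity $2$; the statement collects them as ``a standard representation of multiplicity $4$'' since they share the same dimension and play interchangeable roles inside the EK operator. Once $G$ is identified, the remainder is a direct application of the $S_m$-representation theory summarized in \Secref{sec:theory}.
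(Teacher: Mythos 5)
Your proposal is correct and follows essentially the paper's own route: you identify the effective symmetry group as $S_{L/2}^{\mathrm{odd}} \times S_{L/2}^{\mathrm{even}}$ (forced by the deterministic hidden-state transitions), treat the last-token factor as a two-fold multiplicity, and apply the trivial~$\oplus$~standard splitting of the degree-one permutation module to each parity family, arriving at $6$ trivial copies and $4$ copies of dimension $L/2-1$ with the check $6+4(L/2-1)=2L+2$. The only cosmetic differences are your choice of the basis $\{1,\,x^{L+1}_1\}$ in place of the paper's $\{x^{L+1}_1,\,1-x^{L+1}_1\}$ for the last-token block, and your (correct and worthwhile) remark that, as irreps of the product group, the four standard-type summands fall into the two non-isomorphic classes $(\mathrm{std},\mathrm{triv})$ and $(\mathrm{triv},\mathrm{std})$, each of multiplicity $2$, which the claim collapses by dimension.
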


Starting from the largest structure, notice the kernel is a product of two terms ($\mathfrak{A},\mathfrak{B}$ in \eqref{eq:simplified-kernel}).
The $\mathfrak{A}$ part is diagonalized
in the basis 
\begin{equation}
a(\vec{x}^{L+1}) = x^{L+1}_1,~~b(\vec{x}^{L+1}) = (1-x^{L+1}_1),
\end{equation}
which leads to multiplicity $2$ for all irreps involved; one belonging to the $a$ block and one to the $b$ block of $\mathfrak{A}$.

Moving on to the $\mathfrak{B}$ term, as expected from the general argument presented in the previous section, we find it is symmetric under the action of the permutation in the symmetric group $S_L$ on the set of tokens in the context window $\{ x^a \}_{s=1}^L$. The full $S_L$ symmetry is not, however, presented in the probability distribution of our chosen dataset, as tokens have different emission probabilities under different hidden states. Nevertheless, a smaller symmetry is preserved, allowing permutations only within the same hidden states. Since the transition between hidden states is deterministic, we find that all odd (even) tokens belong to the same hidden state and can be permuted between themselves, giving rise to the smaller symmetry group $S_{L/2}^{\rm odd} \times S_{L/2}^{\rm even} := \mathcal{S}$ \footnote{Assuming $L$ is even for simplicity} as a symmetry of $\hat{K}$.

As discussed in the previous subsection, in this example, only polynomials up to first degree can have non-vanishing eigenvalues. First degree polynomials are decomposed to two irreps (see theorem \ref{thm:irreps of multilinear}), namely the trivial $(L/2)$ and standard representation $(L/2-1,1)$. The trivial representation has dimension $1$ with multiplicity $4$: one for the even subspace and one for the odd subspace, times $2$ for the multiplicity from $\mathfrak{A}$. The standard representation has dimension $L/2-1$ with multiplicity $4$, broken down in the same way as before. For zeroth degree polynomials (constants) only the trivial representation exists, of multiplicity $2$: again, the coming from $\mathfrak{A}$. The process of concretely writing down the functions that span the space can be simplified further using the cyclic 
sub-group $C_{L/2}$, this process is details in appendix \ref{appendix:linear_example}.

Using symmetries and the partition to $\mathfrak{A},\mathfrak{B}$ we are able to reduce the eigenvalue problem to two\footnote{One for the $a$ block and one form the $b$ block} $3 \times 3$ spaces of the trivial representation, which are diagonalizable in closed form, and a diagonalized $2L-4$ dimensional space of the standard representation.
We can repeat the same procedure for polynomials of any order and decompose them to irreps (see appendix \ref{appendix:sym-group-irreps} for a discussion of the method, and an example); thereby allowing us to expand the results to a wider class of NNs including non-linear and deeper NNs.

Gathering the results of this section, \textbf{Given}:
     (1) \eqref{eq:EK-pred-diag}, together with the (2) learnable target given in \eqref{eq:learning-target-coeff}, the (3) eigendecomposition given in equations \ref{eq:k_vectors}, \ref{eq:appndx_k_eigvals}, and the (4) eigendecomposition of the two $3 \times 3$ spaces spanned by the basis in \eqref{eq:phi_base_def}.      
\textbf{One can}
    predict accurately the output of the model described in section \ref{subsec:model} with linear activation functions in the GP limit. Additionally, One can make accurate predictions for the generalization loss, even under a distributional shift.

\begin{figure*}[t]
    \centering
\includegraphics[width=.68\columnwidth]{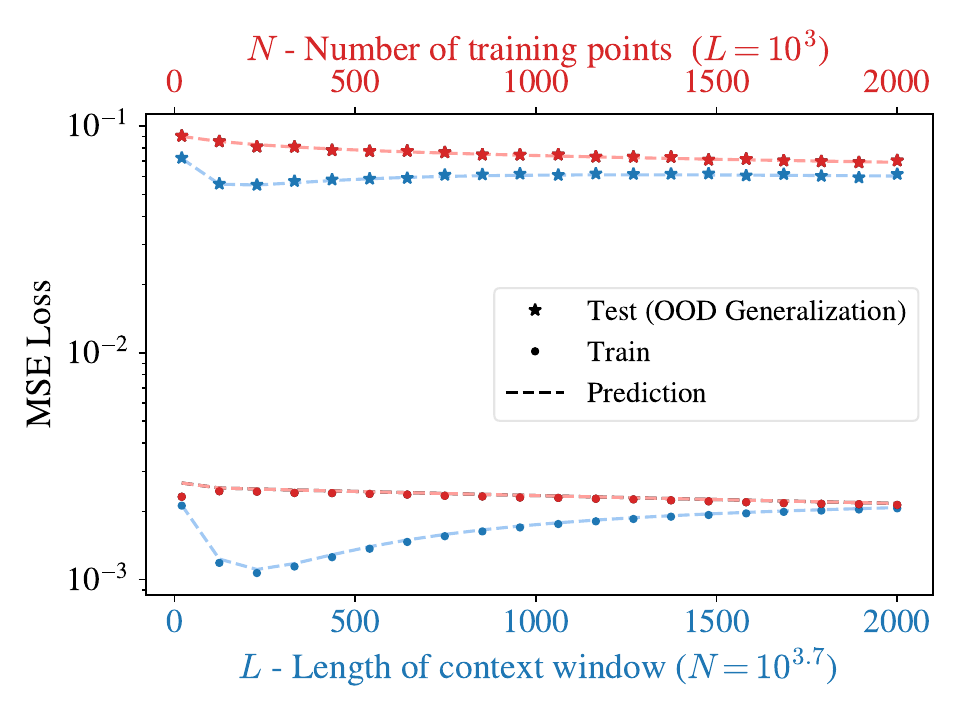}
\vspace{-0.35cm}\hfill
\includegraphics[trim = 60mm 5mm 60mm 20mm, clip,width=.68\columnwidth]{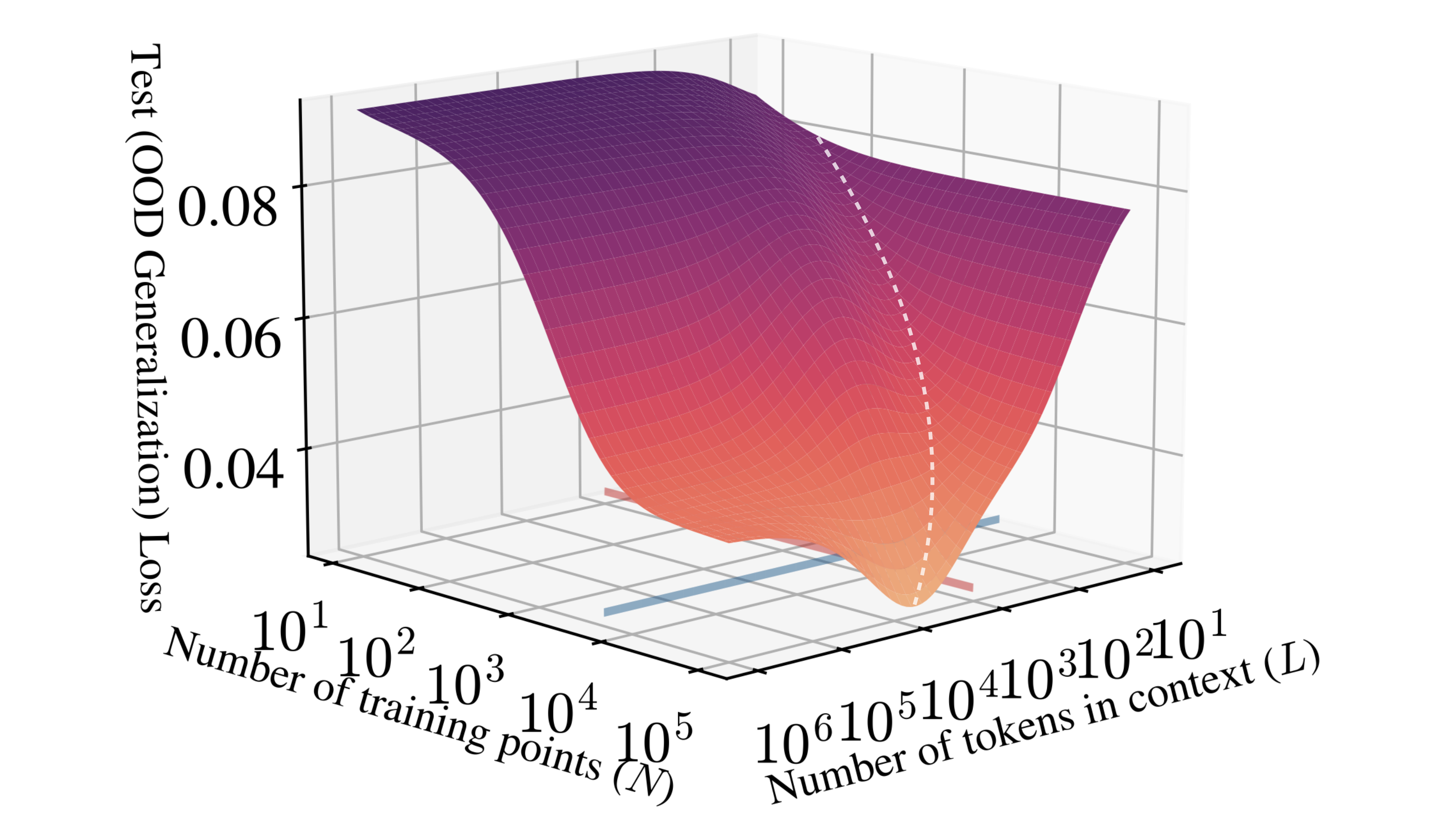}
\vspace{-0.35cm}\hfill
\includegraphics[width=.68\columnwidth]{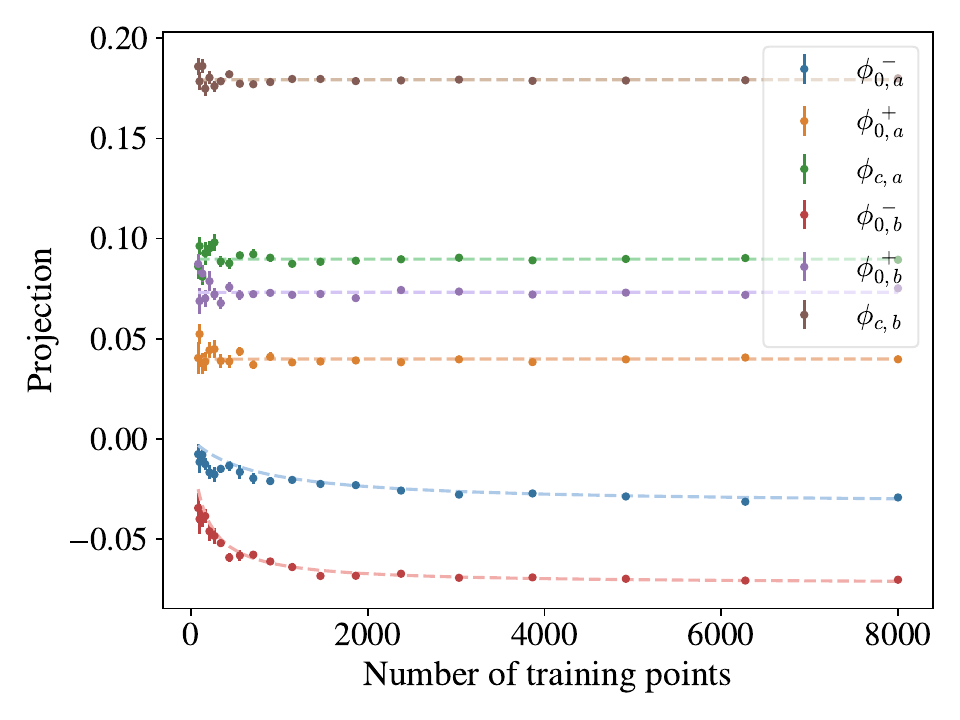}
\vspace{0.7cm} 
     \caption{{\bf Left: (theory vs. experiment)} Two sections, along constant $N$ (in blue) and $L$ (in red), of the MSE loss as shown in the center. We find good agreement between our theoretical predictions (calculated for the train and test distributions) and exact inference with a GP, equivalent to inference with an infinitely wide NN. 
     Stars indicate the experimental MSE loss calculated on the test dataset, where the majority of samples are OOD w.r.t to training dataset.
     {\bf Center: (learnability scaling law)} Prediction for the generalization MSE. We see the learnability threshold as a diagonal valley (marked by the dashed line) of constant $N/L$ ratio as a consequence of having target eigenvalues of scaling $\lambda \sim L^{-1}$. This is the onset of the regime where there are enough datapoints to use the full potential of the context length. 
     {\bf Right: (linear MLP predictions extend to non-linear MLP)} Projections of the exact GP inference predictor with \emph{non-linear MLP} on the basis vectors $\{ \varphi \}$ specified in \eqref{eq:phi_base_def}, together with the theoretical prediction for the projection for a GP with \emph{linear MLP} as a function of the number of training points. The predictions for the transformer block with linear attention and linear MLP are useful for non-linear MLP as well.}
    \label{fig:linear}
\end{figure*}
\section{Experimental Results}
\label{sec:results}

In this section, the theory is compared to numerical experiments. We start by comparing our predictions for the example of linear activation functions with exact Bayesian inference using the NNGP prior, which is equivalent to the output of a NN train with Langevin dynamics. We predict the performance OOD and show good agreement with experiments. The $N$-$L$ scaling law is shown and the kernel for ${\rm erf}$ activation function is shown to give a similar predictor per $N$. We then move on to real, wide but finite, NN with non-linear activation functions trained by SGD. We show nontrivial predictions such as OOD performance carryover from the linear MLP to the non-linear MLP case. We then present the kernel's spectrum of an NN with standard ${\rm softmax}$ attention and show that the scaling law bounds derived on the eigenvalues become tight. Lastly, we analyze WikiText-2 and show that at leading order correlations the dataset does indeed appear to be permutation symmetric to a good approximation.

\subsection{Linear Activation Functions}
On the left of Fig. \ref{fig:linear} the predictions for the loss as a function of $N$ and $L$ are presented, together with exact Bayesian inference, showing good agreement both on train ($p \sim U(0.4,0.4+10^{-1.5}),~q \sim U(0.5,0.5+10^{-1.5})$) and test ($p,q \sim U(0,1)$) distribution loss. For details on the analytical predictions for the loss see Appendix~\ref{appendix:OOD-MSE}.

The $2d$ manifold of the analytical prediction for the (OOD generalization) loss computed on a mixture of HMMs drawn from $p,q \sim U(0,1)$, as a function of $N,L$ is presented in Fig. \ref{fig:linear} center.
As shown in Appendix \ref{appendix:linear_example}, only functions that belong to the trivial representation contribute to spanning the target. Four out of six corresponding eigenvalues scale as $\lambda \sim 1$ and the other two scales as $\lambda \sim L^{-1}$. Resulting in the scaling law $N \simeq 1/ \lambda \propto L$ 
This result appears as a connected valley of minimal loss when holding $N$ constant, i.e. a constant $N \propto L$ ratio. 

Lastly, we compare the GP predictor for a network with non-linear MLP $\phi(x)={\rm erf} (x)$ (in reference to \eqref{eq:NN_def}) to that of the linear example solved above.
To the right in Fig \ref{fig:linear}, the exact GP inference (computed for  \emph{non-linear} MLP) predictions projected on to the $\{ \varphi \}$ basis vectors in \eqref{eq:phi_base_def} are compared with analytical predictions for a network with \emph{linear} MLP as a function of the number of samples in the dataset. They match very well 
in terms of learnability as seen by the rate at which the projection coefficients reach their asymptotic value
. 
We see the NN with non-linear MLP follows the same trend and learns a similar predictor to the one with linear MLP.

\vspace{-0.2cm}
\begin{figure*}[t]
    \centering
\includegraphics[width=.68\columnwidth]{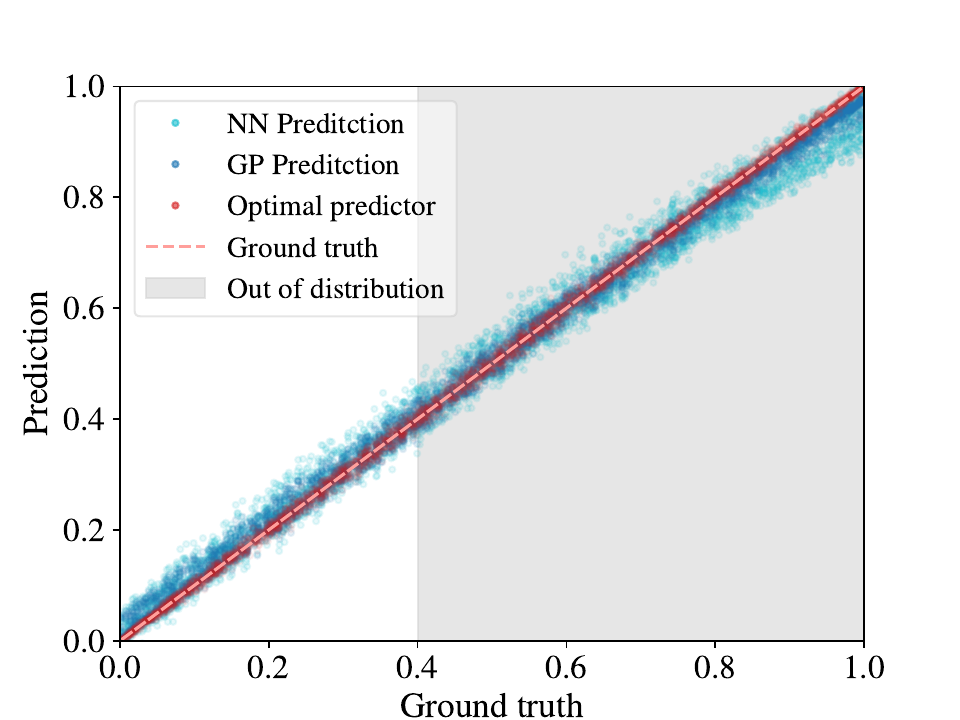}
  \vspace{-0.35cm}\hfill
\includegraphics[width=.68\columnwidth]{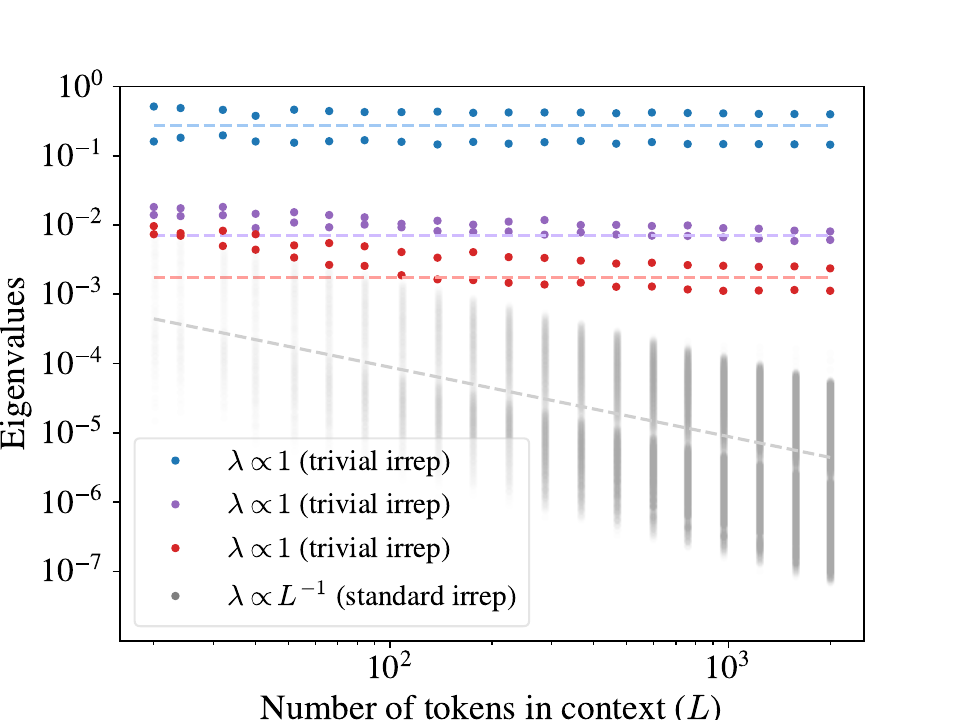}
  \vspace{-0.35cm}\hfill
\includegraphics[width=.68\columnwidth]{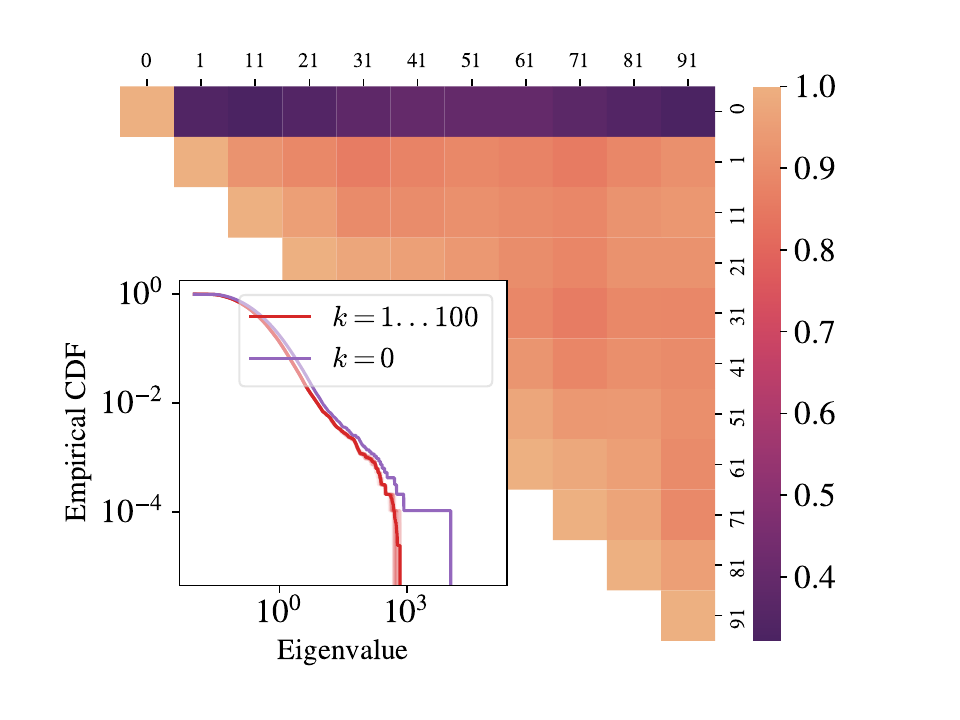}
\vspace{0.7cm} 
     \caption{ {\bf Left: (Predicted OOD generalization matches experiment)} Finite NN and exact GP inference (cyan and blue respectively), optimal predictions (red) and ground truth (dashed light red) as a function of the ground truth target for the sample. We see good agreement between predictions and experiment in distribution and OOD. The performance OOD is only slightly worse than in distribution, as indicated by the spread around the ground truth.
     {\bf Center: (the scaling bounds are tight for eigenvalues of softmax attention)} 
    The spectrum of the empirical kernel of a network with ${\rm softmax}$ attention as a function of the context length ($L$). The scaling with $L$ is bound tightly by the scaling deduced from the dimension of the corresponding irrep of the symmetric group. The light dashed lines serve only as a guide to the eye for the scaling law; they are not predictions for specific values.
    {\bf Right: (evidence for approximate permutation symmetry in WikiText)} The triangle shows the cosine similarity induced by the Frobenius inner product between the linear features of WikiText $C^{kk}$ and $C^{k'k'}$ for the $k$'s indicated on the boundary. We see all sampled $k \neq 0$ are similar to one another but different from $k=0$ as predicted by the irrep decomposition. The Empirical CDF plot shows the CDF for the eigenvalues of those sampled matrices. Different $k$'s for $k\neq0$ are almost identical. $k=0$ has a distinct distribution.}
    \label{fig:erfy}
\end{figure*}
\subsection{Finite Neural Networks and Natural Language}
While the example of linear activation functions can be fully understood one may wonder whether the conclusions carry over to more common scenarios, here we address these concerns. The doubts we will try to dispel are \vspace{-0.2cm}
\begin{itemize}
\item While there is an established correspondence between Bayesian inference with a GP and an infinite NN, does a finite NN, trained with SGD with a finite learning rate show similar results?
\item Could it be that the OOD generalization is a result of the limited expressibility and or the GP limit? 
\item Are the bounds derived using the dimension of the corresponding irreps tight?
\item Among other tasks, transformers are widely used in NLP, is NL permutation symmetric to any degree? 
\end{itemize}
The first result involves again a non-linear MLP with ${\rm erf}$ activation function and linearized MHA. The NN is trained on $8,000$ samples drawn from the mixture $p,q \sim U(0,0.4)$ with SGD with a mini-batch size of 50 and a learning rate of $10^{-3}$ for $10,000$ epochs.
In figure \ref{fig:erfy} left, the output of the NN, together with exact Bayesian inference with the NNGP prior, the optimal predictor, and the ground truth, are shown as a function of the ground truth target value; each point on the plot can be thought of as correlation between the output and the true value. The performance OOD only starts to decline slightly around $0.8$, matching well with the exact GP predictions despite the finite size and SGD training. It remains to be seen whether these results hold for a large transformer.

In the center panel of Fig.~\ref{fig:erfy} we see the spectrum of the kernel matrix as defined in \eqref{eq:kernel_mat_def}, for a NN with $\Phi = {\rm softmax}$ and $\phi(x)=x$. The scaling of the eigenvalues is improved such that the eigenvalues in the trivial irrep scale as $L^0$ and the eigenvalues in the standard irrep scale as $L^{-1}$, meaning, they take the maximum scaling possible based on the degeneracy of the irrep. Comparing these values to those of a fully connected network (FCN) can give a view from infinity on the advantages transformers have over FCNs. For our dataset and task, the target function belongs to the trivial representation of the symmetric groups, this means that the network with $\Phi = {\rm softmax}$ can learn the target in $N=\Theta(1)$ samples. On the other hand, an FCN will need $N=O(L)$ samples, as its symmetry group would be the sign-symmetric group, where all $L$ linear functions are degenerate~\cite{yang_fine-grained_2020}. Such a result extends further to settings used to study in-context learning~\cite{von_oswald_transformers_2023}, as they too aggregate symmetrically over context, allowing transformers to gain significant advantage over FCNs. 

Finally, we present some evidence suggesting NL does possess an approximate permutation symmetry, at least up to linear correlations.
We examine the (first order) correlations in WikiText-2
at the basis of the cyclic permutation irreps (for experimental details see appendix \ref{appendix:wikitext}) \vspace{-0.2cm}
\begin{align}
C^{k k'}_{ij} &:= \E_{X \sim {\rm WikiText-2}} \left[ X^a_i V^{a k} X^b_j V^{b k'} \right]; \\
V^{ak} &:= \exp(i \frac{2 \pi}{L} a k),~~  \vspace{-0.3cm}
\begin{aligned} \vspace{-0.2cm}
    a &= 1,...,L \\
    k &= 0,...,L-1
\end{aligned} .\vspace{-0.4cm}
\end{align}
If permutation symmetry were to hold, we would expect all $C^{k k}$ correlation matrices with $k \neq 0$ to be interchangeable, as they are all part of the standard irrep. We quantify this quality by the cosine similarity induced by Frobenius inner product and by their spectrum. As shown in Fig. \ref{fig:erfy} right, there is indeed a large similarity in the standard irrep. This similarity does not exist with the trivial irrep ($k=0$). The spectrum of the different correlation matrices inside the standard irrep is almost identical as well, as indicated by the eigenvalue CDF in the same figure. This similarity, again, does not exist between the two irreps (i.e. $k=0,k\neq 0$).

\section{Discussion}
\label{sec:discussion}

In this work, we analyzed a family of transformer-like models and showed that their inductive bias can be understood using the representation theory of the symmetric group when the dataset possesses permutation symmetry. In this setting, we derived a scaling law for the number of data samples required to learn a target as a function of the context length. We showed these predictions are generic and hold with added non-linear activation functions, as well as OOD.

Critically, the above results depend on a permutation symmetric dataset\footnote{Though the solved linear example shows permutation symmetry between subsets in the sequence is also powerful} while some settings do have this exact symmetry like transformers for sets~\cite{lee_set_2019,kim_transformers_2021} or setting studied for ICL~\cite{power_grokking_2022,von_oswald_transformers_2023}, natural language does not seem to have it prima facie. We have shown that, in fact, first-order correlations in WikiText-2 seem to largely manifest this symmetry. This means that when learning linear targets or up to $O(L)$ samples, such models will be bound by the scaling laws discussed above. One such linear function (in the context tokens) that is relevant to NLP is the copying heads discussed in \citet{olsson_-context_2022}, while induction heads would be second order in the context tokens. This fact motivates examining the correlations in NL to second order, as a concrete mechanism for in-context learning can already appear there; we leave this for future work.

Notwithstanding, \emph{some} conclusions can be carried from a symmetric dataset to an arbitrary one. Assume the target is one of the eigenfunctions of the kernel when diagonalized on a permutation symmetric measure $p(x)$, namely $y(x)=\varphi_t(x)$, we may substitute into~\eqref{eq:learnability_def} a predictor $f(x)$ attained by inference with the same GP on a different, possibly non-symmetric dataset $D_N=\{x_\mu,y(x_\mu)\}_{\mu=1}^N$. The learnability with the new $f(x)$ can nevertheless be bounded \cite{future_work}
\begin{equation}
    L_t\leq \frac{\lambda_t N \E_{x\sim D_N} [\phi_t^2 (x)]} {\sigma^2}.
\end{equation}
suggesting $N\approx \lambda_t^{-1} \sigma^2/\E_{x\sim D_N} [\phi_t^2 (x)] $ samples will be required to learn $\phi_t$ from $D_N$ when measuring learnability on $p(x)$.

Lastly, while our work accounts for the implicit inductive bias of the architecture, it does not address other sources of inductive bias, like finite learning rate~\cite{lewkowycz_large_2020,beugnot_benefits_2022,mohtashami_special_2023} and finite size corrections to the GP limit. As recent works have shown~\cite{seroussi_separation_2023,bordelon_self-consistent_2023,segadlo_unified_2022,li_statistical_2021,pacelli_statistical_2023,dyer_asymptotics_2019,aitchison_deep_2021}, the GP limit is used as a starting point for more advanced methods that study finite size corrections and capture important phenomena like representation learning. Studying such corrections is left to future work.

\subsection{Related Works}


\emph{Inductive bias}. The term spectral bias was coined by~\citet{rahaman_spectral_2019}, referring specifically to the Fourier spectrum. The term has since been used to describe other forms of inductive bias, including those coming from the spectrum of the corresponding kernel (eigenvalue decomposition)~\cite{bordelon_spectrum_2020,cao_towards_2021,canatar_spectral_2021}. 
The form of inductive bias studied in this paper has been extensively studied for fully connected networks (FCNs): for data uniformly distributed on a hypersphere~\cite{basri_convergence_2019,bietti_deep_2021,azevedo_sharp_2014,scetbon_spectral_2021} and non-uniformly~\cite{basri_frequency_2020}, and for Gaussian and uniform distribution on the hypercube~\cite{yang_fine-grained_2020}. \citet{bietti_sample_2021} have studied how additional symmetries above the rotational symmetry of FCNs can reduce sample complexity and \citet{tahmasebi_exact_2023} generalized these results.
In the context of CNNs symmetries were used to design better inductive bias (see~\citet{cohen_group_2016}), and their corresponding kernels' spectra were analyzed as well~\cite{bietti_approximation_2021,xiao_eigenspace_2022,cagnetta_what_2023,geifman_spectral_2022} showing how their inductive bias is beneficial for learning a range of tasks.
To the best of our knowledge, our work is the first to study inductive bias of transformers from the perspective of symmetries in the infinite width (GP) limit. Recently an empirical work has characterized some aspects of the inductive bias of transformers~\cite{bhattamishra_simplicity_2023}; our work complements it and can be seen as a theoretical explanation. Lastly, ~\citet{fu_what_2023} has studied random feature attention under limiting assumptions (no positional encoding, limited depth); our work approaches the challenge from a the perspective of symmetry \& representation theory and its results are valid for a more general family of models. 

\emph{Understanding transformers}. A large body of knowledge has accumulated regarding understanding specific phenomena related to transformers like grokking~\cite{liu_towards_2022,liu_omnigrok_2022,rubin_droplets_2023,nanda_progress_2023}, in-context learning~\cite{olsson_-context_2022,garg_what_2022,akyurek_what_2022,von_oswald_transformers_2023}. Other works focused on other aspects like signal propagation~\cite{cowsik_geometric_2024,noci_signal_2022} and OOD generalization~\cite{nam_achieving_2022}. Our work focuses directly on the transformers' inductive bias and characterizes it at the GP limit.

\clearpage
\section*{Acknowledgements}

Z.R. and I.L. acknowledge support from ISF Grant 2250/19.

\section*{Impact Statement}
This paper presents work whose goal is to advance the field of Machine Learning. There are many potential societal consequences of our work, none which we feel must be specifically highlighted here.


\bibliography{manual,references1}
\bibliographystyle{icml2024}

\newpage
\appendix
\numberwithin{equation}{section}
\onecolumn
\section{Introduction to Key Concepts in Representation Theory for Eigenvalue Problems}
\label{appendix:rep_fourier_intro}
Symmetries can greatly simplify the above eigenvalue problem. Let $G$ be a symmetry group, we say the eigenvalue problem possesses this symmetry provided \begin{align}
    \forall g \in G, ~ &
    k(T_g~ x, T_g~ y) = k(x, y) \\ \nonumber
    & \pdata(T_g~ x)=\pdata(x)
\end{align} where the linear transformations ($T_g$) are some faithful representation of $G$ (i.e. $T_g T_{g'}=T_{gg'}$ and $T_{g} T_{g'} = Id$ iff $g g'$ is the identity element of $G$). 

As a concrete example and to make contact with the terminology in the main text, consider the case where $x \in {\rm R}^2$ which we express in polar coordinates $x=(r_x\cos(\theta_x),r_x\sin(\theta_x))$, and $p(x)$ effectively discretizes $\theta$ and fixes $r$ (i.e. $p(x)=\delta(r_x-1)N^{-1}\sum_{j=1}^N \delta(\theta_x-2\pi j/N)$). Let $K(x,y)=||x-y||$, $G=Z_N$ given by the rotation of $x$ in units of $2\pi/N$, and $T_g$'s given by the corresponding $2 \times 2$ rotation matrices on $x$. 

Next we utilize $G$ to find the spectrum of $K$ w.r.t. $p(x)$. To this end, we consider the space on which $\hat{K}$ acts--- the vector space of functions of $x$ ($f(x)$) with the distance induced by $p(x)$. This space is $N$ dimensional and spanned by $[f(x_1),...,f(x_N)] \equiv \vec{f}$. The linear action of $T_g$ on $x$ induces a linear action on function space (equivalently on $\vec{f}$) via $\hat{T}_g \cdot f(x) = f(T_g x)$. Symmetry under $G$, as defined above, implies that $\hat{T}_g$'s all commute with $\hat{K}$. Consequently eigenspaces of $\hat{K}$ are invariant under all $\hat{T}_g$'s.


The above guides us to look for the minimal vector spaces which are invariant under all $\hat{T}_g$'s. These are known as irreducible representations (irreps). The group $Z_N$ is known to have $N$ distinct irreducible representations of dimension 1 labelled by $k \in \{ 2\pi/N,4\pi/N,...,2\pi\}$. The corresponding invariant spaces are simply the discrete Fourier mode vectors $\vec{v}_k=[e^{2\pi i k/N},e^{4\pi i k/N},...,1]$. It can be checked that all $\hat{T}_g$'s leave each of these spaces/vectors invariant. This implies $\hat{K}$ is diagonal on the $\vec{v}_k$ basis. Allowing more complicated radial dependence, say by taking $p(x)$ with $\delta(r-1)$ replaced by $\frac{1}{2}[\delta(r-1) + \delta(r-2)]$, the resulting blocks of $\hat{K}$ associated with each irrep would be $2\times 2$. Equivalently stated each block would contain the irrep at multiplicity $2$. Furthermore, for non-abelian $G$ (e.g. augmenting $Z_N$ with reflections), irreps of dimension larger than $1$ generally appear.

\section{A Gentle Introduction to The Use of Symmetry in Kernel Learning and The Symmetric Group}
\label{appendix:sym-group-irreps}
Spectral properties of kernels with respect to the data measure, provide a detailed description of the implicit bias of infinitely wide neural networks. However, diagonalizing a generic kernel operator on a generic measure is challenging. For fully connected networks and rotation symmetric datasets, this difficulty is largely lifted. In fact for uniform data on the hypersphere closed-form expressions for the spectrum and eigenfunctions exist \cite{cohen_learning_2021,canatar_spectral_2021},
the latter being hyperspherical harmonics. These results follow directly from studying the representation theory of the orthogonal group acting on multivariate polynomials. 

For transformer models like the ones introduced above, the analog task is to find representations of the symmetric group acting on multivariate polynomials. Below we provide several concrete examples of such representations, flesh out their implications on spectral bias, and provide a road map for deriving higher representations.  

As a starting point consider a kernel $K(x,y)$ where $x,y \in {\mathcal R}^d$ and some generic dataset measure $p(x)$. Let $S_d$ denote the symmetric group (the group of all possible permutations) on $1,2,...,d$ where an element $\sigma \in S_d$ acts on $x$ as $[\sigma x]_i = x_{\sigma(i)}$ (i.e. the natural action). Assuming $K(x,y)=K(\sigma x,\sigma y)$ and $p(x)=p(\sigma x)$ we wish to solve the following eigenvalue problem 
\begin{align}
\int dy p(y) K(x,y) \varphi_{\lambda}(y) &= \lambda \varphi_{\lambda}(x) 
\end{align}
to simplify the problem, let us assume that $k(x,y)$ contains powers of $x$ and $y$ only up to some finite degree $q$. In that case, any $\varphi_{\lambda}(x)$ with non-zero $\lambda$ must be at most a $q$'th order multivariate polynomial. 

To proceed with finding the spectrum and eigenfunctions, we first address the question of what are the irreducible representations of the symmetric group acting on finite degree polynomials. Irreducible representations  (irreps) of the symmetric group are labelled by partitions of $d$ which we denote by $(d_1,d_2,...,d_k)$ such that $d_1 \geq d_2 \geq ... \geq d_k$ and $\sum_k d_k = d$. These partitions are in one-to-one correspondence with Young Diagrams wherein one simply draws a row of boxes of length $d_1$, followed by a left aligned row of boxes of length $d_2$ etc... 

Conveniently, there is a direct way of constructing an irrep from its Young diagram (see \citet{fulton_representation_2004}). As shown in theorem \ref{thm:irreps of multilinear}, particularly relevant here are Young diagrams of the form $(n-k,k)$. Considering those, the first step is finding all standard Young Tableaux associated with the Young diagram. Standard Young Tableaux are assignments of integers between $1..d$, with no repetitions, to the boxes of the Young Diagram such that all columns and rows are of increasing order. For instance, for the case $(d-2,2)$ and $d=6$ these would be 
\begin{align}
&\begin{ytableau}
       1 & 3 & 5 & 6  \\
       2 & 4 
\end{ytableau} \,\,\,\,\
\begin{ytableau}
       1 & 3 & 4 & 6  \\
       2 & 5 
\end{ytableau} \,\,\,\,\
\begin{ytableau}
       1 & 3 & 4 & 5  \\
       2 & 6 
\end{ytableau} \,\,\,\,\
 \begin{ytableau}
       1 & 2 & 5 & 6  \\
       3 & 4 
\end{ytableau} \,\,\,\,\
\begin{ytableau}
       1 & 2 & 4 & 6  \\
       3 & 5 
\end{ytableau}  \\ \nonumber
& \begin{ytableau}
       1 & 2 & 4 & 5  \\
       3 & 6 
\end{ytableau}  \,\,\,\,\
\begin{ytableau}
       1 & 2 & 3 & 6  \\
       4 & 5 
\end{ytableau} \,\,\,\,\
\begin{ytableau}
       1 & 2 & 3 & 5  \\
       4 & 6 
\end{ytableau} \,\,\,\,\
\begin{ytableau}
       1 & 2 & 3 & 4  \\
       5 & 6 
\end{ytableau} \,\,\,\,\
\end{align}
An important observation here, true for any $(d-k,k)$, is that the lower row completely determines the upper one. Indeed the upper row must consist of all integers besides those in the lower row, arranged in strictly increasing order. We may thus denote such tableaux by their set of lower row integers $i_1,i_2,..,i_k$ (although some combinations may be disallowed). We next associated a monomial of the form $x_{i_1} x_{i_2}...x_{i_k}$ with each such standard Young Tableaux\footnote{Similar to the construction of Specht modules from Young tabloids\cite{fulton_representation_2004}.} \footnote{In the next appendix, where we prove theorem \ref{thm:irreps of multilinear} we take a different approach for the construction of the irreps of the Symmetric group. Here we effectively directly associate monomials with Young Tabloids, while in the next appendix, we use the Young symmetrizers as projectors to irrep spaces without the need for such a less formal, yet more intuitive, association between tabloids and monomials.}. To proceed with the construction we further consider the group of column permutations $C \subset S_d$ wherein we only allow switching of pairs along columns. We then construct the following polynomial element from the monomial
\begin{align}
M^{1}(x)_{i_1..i_k} &= \sum_{\sigma \in C} \sign(\sigma) x_{\sigma i_1}..x_{\sigma i_k}
\end{align}
it then follows (see appendix \ref{appendix:irreps_theorem}) that these $k$'th degree polynomials span the irreps $(d-k,k)$, where the action of $S_d$ amounts to its natural action on the indices $x$. Notably this basis is typically not an orthonormal one. Furthermore, the same representation may appear with any power of $x_{i}$, namely $M^{(m)}=\sum_{\sigma \in C} \sign(\sigma) x^m_{\sigma i_1}..x^m_{\sigma i_k}$,  $m \in \mathbb{N}$, however for discrete measures some of these may collapse onto one another or to the trivial representation. For instance if $x_i \in \{+1,-1\}$,$M^{(2m)}$ is just a constant and $M^{(2m+1)}=M^{(1)}$. 

One notable example of a $(d-k,k)$ representation is the standard representation $(d-1,1)$ equivalent to the natural action on
\begin{align}
{\rm Span} \{ x_i - x_0 \}_{i=1}^{d}
\end{align}
this representation is also equivalent to considering the discrete Fourier modes 
\begin{align}
\varphi_k(x) &= \sum_j e^{i 2\pi k j / d} x_j \,\,\,\ k \in \{ 1,2,...,d-1\}
\end{align}
but omitting $\varphi_{k=0}(x)$ (the trivial representation). The different $k$ numbers, via $e^{2\pi i k/d}$, can also be understood as one-dimensional-irreps of the cyclic group ($Z_n \subset S$). 

Another relevant irrep is the trivial one, corresponding to symmetric (multivariate) polynomials. These are spanned by the Schur polynomials which are again in one-to-one correspondence with Young Diagrams, via however a different association than the one above. Up to an order of, say order $3$, these are spanned by $1$, $\sum_i x_i$,$\sum_{i=j} x_i x_j$, $\sum_{i \neq j}x_i x_j,\sum_{i=j=k} x_i x_j x_k$, $\sum_{i \neq j=k} x_i x_j x_k$, $\sum_{i \neq j \neq k}x_i x_j x_k$. 

Another low dimensional representation is the sign representation of the symmetric group, associated with alternating polynomials (polynomials which are anti-symmetric with respect to exchanging any two variables). All such polynomials are of degree higher than that of the Vandermonde polynomial ($\pi_{1\leq i < j \leq d,n-d}(x_i - x_j)$), thus having a degree higher than $n-1+n-2+...+0=n(n-1)/2$. Due to their high order they would not appear for any $q<d$. We conjecture that these would be exponentially suppressed in $d$ for any NNGP or NTK kernel.    

The above irreps and their associations with polynomials, facilitate the construction of low order polynomial representations. For instance, let us assume that $x_i \in \{+1,-1\}$ and consider all possible polynomials up to second order. These are spanned by three trivial representations (i.e. $(d)$ partition/Young-Diagram) 
\begin{align}
1,\sum_{i=1}^d x_i,\sum_{1 \leq i < j \leq d,n-d} x_i x_j 
\end{align}
two $d-1$ dimension standard representations ($(d-1,1)$)
\begin{align}
\varphi_k(x) \,\,\,\,\ k \in \{1..d-1\} \\ \nonumber
\left(\sum_{i=1}^d x_i\right)\varphi_k(x) \,\,\,\,\ k \in \{1..d-1\}
\end{align}
and one $(d-1)(d-2)/2 - 1$ dimension ($(d-2,2)$) representation spanned by 
\begin{align}
\varphi_{ij}(x) = x_i x_j - x_0 x_j - x_i x_b + x_0 x_b \,\,\,\  b = \min[ \{ k\}_{k=1}^{d} \backslash \{i,j\}] , 1 < i < j \neq 3
\end{align}
Given a measure ($p(x)$) which respects the symmetry, any two polynomials associated with distinct representation would be orthogonal. However, their normalization and the orthogonality relations within the same representations would vary based on the measure. 

Turning to the spectrum, it then follows from standard representation theory arguments that a kernel with $q=2$ has 6 generally distinct eigenvalues. Three generally-non-degenerate eigenvalues are associated with linear combinations of the 3 trivial representations. Two, generally distinct sets, of $d-1$-degenerate eigenvalues associated with the two linear combinations of the two standard representations. Last, one $(d-1)(d-2)/2-1$ degenerate eigenvalue associated with the $(d-2,2)$ representations. 

Finally, we note that the eigenfunctions associated with the two standard representations can mix in a limited manner. Following the assignment of $k$ numbers (or equivalently eigenvalues with respect to the subgroup of $S$ consisting of cyclic permutations of the indices), each basis element we used is also an irrep of the cyclic group. Hence two different values of $k$ cannot be mixed. In addition, other elements in the permutation group are capable of shifting between these $k$ values, hence the linear combinations are constant as a function of $k$. As the eigenfunctions associated with one of the $d-1$-degenerate eigenvalue can be written as $a \varphi_k + b (\sum_i x_i) \varphi_k$ where $a,b$ are $k$ independent coefficients. The corresponding eigenfunction associated with the other $d-1$-degenerate eigenvalues is simply the orthogonal one.  

\section{Decomposition of multilinear polynomials to irreps of the symmetric group}
\label{appendix:irreps_theorem}

\begin{definition}[Partition] A partition of $n$ is an ordered set of positive integers $\lambda=(\lambda_1 , \lambda_2 , ... ,\lambda_m)$ such that $\left \{ \lambda_i \right \}_{i=1}^m \subset \mathbb{N}$, $\sum_{i=1}^m \lambda_i = n$ and $\lambda_1 \geq \lambda_2 \geq ... \geq \lambda_m \geq 1$.
\end{definition}

\begin{theorem}
    Irreps of the symmetric group of $n$ symbols $S_n$ are uniquely labeled by partitions  of $n$~\citep{fulton_representation_2004}
\end{theorem}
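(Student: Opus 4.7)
The plan is to establish this classical theorem via the standard two-pronged argument: first construct at least one irrep per partition of $n$, then invoke a counting argument to show these exhaust all irreps. I would work over $\mathbb{C}$ (or any field of characteristic $0$), where complete reducibility holds by Maschke's theorem, and use the general fact from character theory that the number of (isomorphism classes of) complex irreducible representations of a finite group equals the number of its conjugacy classes.

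First I would recall that the conjugacy classes of $S_n$ are parameterized by cycle types, and that cycle types are precisely partitions of $n$: a permutation with cycles of lengths $\lambda_1 \geq \lambda_2 \geq \cdots \geq \lambda_m$ (with $\sum \lambda_i = n$) is conjugate to another permutation iff they share this data. This is elementary and gives us the target count: the number of irreps of $S_n$ equals the number of partitions of $n$.

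Next I would construct, for each partition $\lambda \vdash n$, an explicit irreducible representation $S^\lambda$ (the Specht module). Fix a Young diagram of shape $\lambda$ and a Young tableau $t$ filling it with $1,\ldots,n$. Let $P_t$ and $Q_t$ be the row-stabilizer and column-stabilizer subgroups of $S_n$, and form the Young symmetrizer in the group algebra $\mathbb{C}[S_n]$:
\begin{equation}
c_\lambda = \Bigl(\sum_{p \in P_t} p\Bigr)\Bigl(\sum_{q \in Q_t} \operatorname{sign}(q)\, q\Bigr).
\end{equation}
Define $S^\lambda := \mathbb{C}[S_n]\, c_\lambda$ as a left $\mathbb{C}[S_n]$-module. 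The main technical lemmas I would need are: (i) $c_\lambda^2 = \kappa_\lambda\, c_\lambda$ for some nonzero scalar $\kappa_\lambda$, which shows $c_\lambda/\kappa_\lambda$ is idempotent and hence $S^\lambda \neq 0$; and (ii) for any $x \in \mathbb{C}[S_n]$, $c_\lambda x c_\lambda \in \mathbb{C} c_\lambda$. From (ii) one deduces that $\operatorname{End}_{\mathbb{C}[S_n]}(S^\lambda) = \mathbb{C}$, which by Schur's lemma (applied in reverse) forces $S^\lambda$ to be irreducible.

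Then I would show distinct partitions give non-isomorphic irreps. The cleanest route is to introduce the dominance order on partitions and prove that if $\lambda \neq \mu$, then $c_\lambda \cdot c_\mu = 0$ whenever $\lambda$ does not dominate $\mu$ (and vice versa), which via the idempotent structure forces $\operatorname{Hom}_{\mathbb{C}[S_n]}(S^\lambda, S^\mu) = 0$. Combined with the counting argument from the first paragraph, this will close the proof: we have produced exactly $|\{\lambda \vdash n\}|$ pairwise non-isomorphic irreps, which must therefore be all of them, and the labeling $\lambda \mapsto S^\lambda$ is the claimed bijection. The main obstacle is the technical lemma (ii) above on $c_\lambda x c_\lambda \in \mathbb{C}c_\lambda$; the proof is a combinatorial argument involving the interaction of row and column stabilizers of differently shaped tableaux, and I would cite the treatment in \citet{fulton_representation_2004} rather than reproduce it in full, since the paper already invokes that source.
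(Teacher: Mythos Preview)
Your proposal correctly outlines the standard Young symmetrizer construction and counting argument found in \citet{fulton_representation_2004}. Note, however, that the paper does not actually prove this theorem itself: it is stated as a classical result with a citation, and used as input for the subsequent decomposition of multilinear polynomials. In that sense your proposal goes well beyond what the paper provides, but it faithfully sketches the argument in the cited reference, so there is no discrepancy to flag.
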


\begin{definition}[Young Diagram]
A Young diagram $\Theta_\lambda$ of a partition $\lambda$ of $n$ is a diagram where one draws a row of $\lambda_i$ boxes for each element in lambda starting with $\lambda_1$, with each subsequent element below it. For example given the partition $\lambda=(3,2,1)$ the Young diagram is
\begin{equation}
\Theta_\lambda = \ydiagram{3,2,1}
\end{equation}
\end{definition}

\begin{definition}[Young Tableau]
A Young Tableau $\Theta^p_\lambda$ associated with a Young diagram $\Theta_\lambda$ with $n$ boxes is a filling where each box is filled with an integer$1,...,n$ with no repetitions (definition vary, here we follow \cite{sagan_symmetric_2001}).
For example some of the Young Tableaux associated with the Young diagram from the previous example are:
\begin{equation}
\Theta_\lambda^C =
\begin{ytableau}
    1 & 2 & 3 \\
    4 & 5 \\
    6
\end{ytableau} ~, ~ 
\Theta_\lambda^a =
\begin{ytableau}
    2 & 5 & 3 \\
    6 & 1 \\
    4
\end{ytableau} ~, ~ 
\Theta_\lambda^b =
\begin{ytableau}
    1 & 4 & 5 \\
    2 & 6 \\
    3
\end{ytableau}
\label{eq:ytableau examples}
\end{equation}
\end{definition}

\begin{definition}[Standard Young Tableau] A standard Young tableau is a Young tableau where the rows and columns increase to the right and to the bottom respectively (Again definitions vary, here we follow~\cite{sagan_symmetric_2001}). For example, $\Theta^b_\lambda$ and $\Theta^C_\lambda$ in~\eqref{eq:ytableau examples} are standard Young tableaux but $\Theta^a_\lambda$ is not. 
\end{definition}

\begin{definition}[Canonical Young Tableau] A canonical Young tableau $\Theta^C_\lambda$ is a standard Young tableau where the numbers $1,...,\lambda_1$ appear in the first row, the numbers $\lambda_1+1,...,\lambda_2$ appear is the second row and so on.  For example, the $\Theta^C_\lambda$ in~\eqref{eq:ytableau examples} is the canonical Young tableau.
\end{definition}

\begin{definition}[Rows and columns subgroups]
\label{rows and columns subgroups}
Given a Young tableau $\Theta^p_\lambda$ of partition $\lambda$ and assignment $p$, we define the rows subgroup $\mathcal{R}^p_\lambda$ which leave invariant the (unordered) sets of numbers appearing in the same row of $\Theta^p_\lambda$. Similarly, we define columns subgroup $\mathcal{C}^p_\lambda$ which leave invariant the (unordered) sets of numbers appearing in the same column of $\Theta^p_\lambda$.
\end{definition}

\begin{definition}[Permutation action on the multilinear polynomials]
Let $\mathcal{T}$ be linear representations of the the symmetric group $S_n$ on the multilinear polynomials, such that the permutation acts naturally on the variables indices. E.g. let $\sigma \in S_n$ be a permutation, and let $P(x_1,...,x_n)=x_1 x_2$ be a multilinear polynomial, then $\mathcal{T}(s) P = x_{\sigma(1)} x_{\sigma(2)}$.    

Define the groups of row and column actions on the multilinear polynomials 
\begin{equation}
R^p_\lambda = \left\{ \mathcal{T} (\sigma) | \sigma \in \mathcal{R}^p_\lambda \right\}
, \quad
C^p_\lambda = \left\{ \mathcal{T} (\sigma) | \sigma \in \mathcal{C}^p_\lambda \right\}
\end{equation}
\end{definition}

\begin{definition} [Row symmetrizer, column anti-symmetrizer and young symmetrizer]
Define the row symmetrizer, column anti-symmetrizer and Young symmetrizer linear operators:
        
\begin{align}
\hat{R}_\lambda^p &= \sum_{r\in R_\lambda^p} r \\
\hat{C}_\lambda^p &= \sum_{c\in C_\lambda^p} {\rm sign}(c) c \\
\hat{Y}^p_\lambda &= \hat{C}^p_\lambda \hat{R}^p_\lambda \end{align}    
\end{definition}

\begin{theorem}
\label{theorem:Young projectors}
    Young symmetrizers associated with standard Young tableaux are projectors to irrep spaces of the symmetric group~\citep{fulton_representation_2004}
\end{theorem}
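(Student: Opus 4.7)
The plan is to work entirely inside the group algebra $A := \mathbb{C}[S_n]$, viewed as a left $S_n$-module under left multiplication, and prove the two statements implicit in the theorem: (i) a suitable rescaling of $\hat{Y}_\lambda^p$ is an idempotent in $A$, and (ii) the resulting left ideal $A\hat{Y}_\lambda^p$ is an irreducible $S_n$-module. Once both hold, the passage from ``idempotent'' to ``projector onto an irrep space'' is automatic, since in a semisimple algebra every idempotent acts as a projection onto its image and that image is here precisely the claimed irrep. I will also need to verify that tableaux of the same shape produce isomorphic modules while tableaux of different shapes produce non-isomorphic ones.

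First I would establish the central combinatorial lemma (the von~Neumann or ``sign'' lemma): for every $g \in S_n$, the product $\hat{R}_\lambda^p \cdot g \cdot \hat{C}_\lambda^p$ is either zero or a scalar multiple of $\hat{R}_\lambda^p \hat{C}_\lambda^p$. The decisive case is $g = e$, which reduces to the claim that if $g \notin \mathcal{R}_\lambda^p \mathcal{C}_\lambda^p$, then there exist indices $i,j$ in the same row of $\Theta_\lambda^p$ whose images $g(i),g(j)$ lie in the same column of $\Theta_\lambda^p$. The transposition $\tau = (i\,j)$ then satisfies $\tau \in \mathcal{R}_\lambda^p$ while $g^{-1}\tau g \in \mathcal{C}_\lambda^p$, and rewriting $\tau g = g (g^{-1}\tau g)$ inside $\hat{R}_\lambda^p \, g \,\hat{C}_\lambda^p$ produces a sign cancellation, since $\hat{R}_\lambda^p$ is invariant under $\tau$ while $\hat{C}_\lambda^p$ picks up a minus sign from $g^{-1}\tau g$. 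The existence of such $(i,j)$ is a short pigeonhole argument about how $g$ can map rows to columns of $\Theta_\lambda^p$.

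With the sign lemma in hand, applying it to $g=e$ gives $(\hat{Y}_\lambda^p)^2 = \alpha_\lambda^p \,\hat{Y}_\lambda^p$ for some scalar $\alpha_\lambda^p$. To show $\alpha_\lambda^p \neq 0$ I would compute the trace of the left-multiplication operator $L_{\hat{Y}_\lambda^p}$ on $A$ in two ways: on one hand it equals $\alpha_\lambda^p \cdot \dim(A\hat{Y}_\lambda^p)$, because $L_{\hat{Y}_\lambda^p}$ acts as $\alpha_\lambda^p$ times a projection onto $A\hat{Y}_\lambda^p$; on the other hand the coefficient of the identity permutation in $\hat{Y}_\lambda^p$ is $1$, so the trace is $n!$. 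Thus $\alpha_\lambda^p = n!/\dim(A\hat{Y}_\lambda^p) \neq 0$, and $e_\lambda^p := (\alpha_\lambda^p)^{-1}\hat{Y}_\lambda^p$ is an idempotent, i.e.\ a projector.

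Finally I would establish irreducibility of $V_\lambda^p := A\hat{Y}_\lambda^p$: given a nonzero submodule $W \subseteq V_\lambda^p$, the sign lemma yields $\hat{Y}_\lambda^p W \subseteq \mathbb{C}\,\hat{Y}_\lambda^p$, and idempotency rules out $\hat{Y}_\lambda^p W = 0$, so $\hat{Y}_\lambda^p \in W$ and $W = V_\lambda^p$. Restricting to \emph{standard} tableaux is what makes the images pairwise transversal enough to span the full isotypic component of $\lambda$. Isomorphism class depends only on the shape: two tableaux $p,p'$ of the same shape are related by a permutation $\sigma$, and conjugation by $\sigma$ sends $\hat{Y}_\lambda^p$ to $\hat{Y}_\lambda^{p'}$, identifying the modules. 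Non-isomorphism between different shapes follows from a variant of the sign lemma combined with an ordering argument (dominance order) showing $\hat{Y}_\mu^{p'} \hat{Y}_\lambda^p = 0$ when $\mu \not\geq \lambda$. The main obstacle in the whole proof is the sign lemma: the rest is formal manipulation in a semisimple algebra, but the combinatorial pigeonhole identification of the cancelling transposition $\tau$ is the single nontrivial step on which everything else rests.
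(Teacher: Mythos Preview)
The paper does not supply its own proof of this theorem; it is stated with a citation to Fulton--Harris and used as a black box. Your proposal is precisely the classical argument from that reference (Lecture~4), so in substance you are reproducing the very proof the paper defers to.

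Two bookkeeping issues to fix, though neither is a structural gap. First, the paper defines $\hat{Y}_\lambda^p = \hat{C}_\lambda^p\hat{R}_\lambda^p$, whereas your sign lemma is phrased for $\hat{R}_\lambda^p\, g\, \hat{C}_\lambda^p$, which is the form adapted to the opposite convention $\hat{Y} = \hat{R}\hat{C}$; you should swap the roles of $\hat{R}$ and $\hat{C}$ throughout (the two orderings give anti-isomorphic, hence module-isomorphic, left ideals, so nothing is lost). Second, your pigeonhole step is internally inconsistent: from ``$g(i),g(j)$ lie in the same column'' one obtains $(g(i)\,g(j)) = g\tau g^{-1}\in\mathcal{C}_\lambda^p$, not $g^{-1}\tau g\in\mathcal{C}_\lambda^p$ as you write. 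With $g\tau g^{-1}\in\mathcal{C}_\lambda^p$ the cancellation actually goes through for $\hat{C}_\lambda^p\, g\, \hat{R}_\lambda^p$ (the paper's ordering), so once you align the convention the two slips cancel each other. Finally, ``idempotency rules out $\hat{Y}_\lambda^p W=0$'' is too compressed: the clean argument is that $\hat{Y}_\lambda^p W=0$ forces $W\cdot W\subseteq A\hat{Y}_\lambda^p\cdot W=0$, and a nonzero left ideal with square zero contradicts Maschke semisimplicity of $\mathbb{C}[S_n]$.
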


\begin{lemma}
\label{lemma:transposition_vanish}
If there exists a transposition $t^* \in C_\lambda^p$ that leaves a monomial $M$ unchanged, $M$ vanishes under the action of the column anti-symmetrizer - $$
\exists t^* \in C_{\lambda}^{p}~~{\rm s.t.}~~t^* M=M \rightarrow\hat{C}_{\lambda}^{p} M=0.
$$
\begin{proof}
	Let $\Theta^p_\lambda$ be a standard Young tableau of a partition $\lambda$.
	Let $\hat{C}_\lambda^p$ be the column anti-symmetrizer associated with $\Theta^p_\lambda$.
    Let $M(x_1,x_2,...,x_n)$ be a multilinear monomial in the variables $x_1,x_2,...,x_n$.
	Let $t^* \in C_\lambda^p$ be a transposition such that $t^*M =M$.
	A transposition is an involution, that means, it is a bijection from the group to itself and $t^* t^* = e$, where $e$ is the identity element. 
	Right multiplication with $t^*$ maps any element $c_i \in C_\lambda^p$ from the column group to $c_j = c_i t^*$ such that 
 \begin{align}
 {\rm sign} (c_i) c_i M = {\rm sign} (c_i t^* t^*) c_i t^* t^* M  = {\rm sign} (c_j t^*) c_j t^* M = - {\rm sign} (c_j) c_j M.
 \end{align}
	
     We have constructed a unique pairing between each $c_i \in C_\lambda^p$ and $c_j \in C_\lambda^p$ such that $c_i \neq c_j$ and ${\rm sign} (c_i) c_i M = - {\rm sign} (c_j) c_j M$ that is 
        $$\forall c_i \in C_\lambda^p~~ \exists!  c_j \in C_\lambda^p~~{\rm s.t.}~~
        c_i \neq c_j \land{\rm sign} (c_i) c_i M = - {\rm sign} (c_j) c_j M.$$
    That means the terms in the sum cancel in pairs $\hat{C}_\lambda^p  M = \sum_{c\in C_\lambda^p} {\rm sign}(c) c M = 0$.
\end{proof}
\end{lemma}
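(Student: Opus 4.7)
The plan is to exploit the fact that a transposition $t^*$ is an involution of order two and sign $-1$, and use it to pair elements of $C_\lambda^p$ in such a way that their contributions to $\hat{C}_\lambda^p M$ cancel. Concretely, right multiplication by $t^*$ defines a map $c \mapsto c t^*$ on $C_\lambda^p$. Because $(t^*)^{-1} = t^*$ and $C_\lambda^p$ is a group containing $t^*$, this map sends $C_\lambda^p$ bijectively to itself. Moreover it is fixed-point-free: $c t^* = c$ would force $t^* = e$, contradicting the hypothesis that $t^*$ is a transposition. Therefore $C_\lambda^p$ decomposes into disjoint two-element orbits $\{c,\,c t^*\}$.

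Next I would compute the combined contribution of a single such orbit to $\hat{C}_\lambda^p M$. Using that $\mathrm{sign}$ is a homomorphism with $\mathrm{sign}(t^*) = -1$, and that $t^* M = M$ by hypothesis, I would write
\begin{equation}
\mathrm{sign}(c)\, c M + \mathrm{sign}(c t^*)\,(c t^*) M
= \mathrm{sign}(c)\, c M - \mathrm{sign}(c)\, c (t^* M)
= \mathrm{sign}(c)\, c M - \mathrm{sign}(c)\, c M = 0.
\end{equation}
Summing this cancellation over the orbit partition of $C_\lambda^p$ yields $\hat{C}_\lambda^p M = \sum_{c} \mathrm{sign}(c)\, c M = 0$, which is the claim.

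The argument is structural rather than computational, so the only point that really needs care is checking that the orbits are genuine pairs (not singletons) and that every element of $C_\lambda^p$ is accounted for exactly once. Both facts reduce to the two elementary observations that $t^* \in C_\lambda^p$ and $t^* \neq e$, together with the group axioms for $C_\lambda^p$. No property of standard versus non-standard tableaux is used, and no special structure of $M$ beyond $t^* M = M$ enters, which is consistent with the generality in which the lemma is stated. If anything is mildly subtle, it is making explicit that the statement $t^* M = M$ is to be read as an equality in the polynomial ring (after the natural action $\mathcal{T}$ of $S_n$), so that $c(t^* M) = (c t^*) M$ is just associativity of the action; once that is noted the proof reduces to the orbit-pairing computation above.
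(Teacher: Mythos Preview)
Your proof is correct and follows essentially the same approach as the paper: pair each $c \in C_\lambda^p$ with $c t^*$ via right multiplication, use $\mathrm{sign}(t^*)=-1$ and $t^* M = M$ to show the two contributions cancel, and sum over the resulting disjoint pairs. If anything, your version is slightly more explicit in verifying that the pairing is fixed-point-free and hence a genuine partition into two-element orbits.
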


\begin{lemma}
\label{lemma:rows<=2}
All multilinear monomials in $n$ variables, vanish when acted upon with a column anti-symmetrizer that corresponds to a Young tableau with more than $2$ rows
\begin{proof}
	Let $M(x_1,x_2,...,x_n)$ be a multilinear monomial in the variables $x_1,x_2,...,x_n$.
	Let $\Theta^p_\lambda$ be a standard Young tableau of a partition $\lambda$ that has more than 2 rows.
	The first column in $\Theta^p_\lambda$  gives raise to at least $3$ transpositions $(ab),(bc),(ac)$.
	Since each variable must either appear in $M(x_1,x_2,...,x_n)$ to a single power or zeroth power, out of the $3$ variables $x_a,x_b,x_c$ at least two must appear to the same power.
	Because the product of our variables is not ordered, at least one of the $3$ transpositions leaves $M(x_1,x_2,...,x_n)$ unchanged.
	Applying lemma \ref{lemma:transposition_vanish}, $M(x_1,x_2,...,x_n)$ must vanish under the action.
\end{proof}
\end{lemma}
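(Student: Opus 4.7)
The plan is to reduce the lemma immediately to Lemma \ref{lemma:transposition_vanish} by producing an explicit transposition in the column subgroup that fixes the given monomial. The hypothesis that the Young tableau has more than two rows is used in exactly one place: it guarantees that the first column of $\Theta^p_\lambda$ contains at least three entries, call them $a, b, c$. By Definition \ref{rows and columns subgroups} (and its polynomial version), each of the transpositions $(a\,b)$, $(b\,c)$, $(a\,c)$ lies in $\mathcal{C}^p_\lambda$, and hence the corresponding operators lie in $C^p_\lambda$.

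Next, I would invoke the multilinearity hypothesis to constrain the exponents: in a multilinear monomial $M(x_1,\dots,x_n)$ every variable appears with exponent either $0$ or $1$. Applying the pigeonhole principle to the three variables $x_a, x_b, x_c$, at least two of them share the same exponent. Whichever pair that is, the associated transposition in $C^p_\lambda$ leaves $M$ invariant (either because both variables are absent, or because swapping two variables that each appear to the first power does not change the product). This gives the hypothesis $t^\ast M = M$ of Lemma \ref{lemma:transposition_vanish} for some $t^\ast \in C^p_\lambda$, whence $\hat{C}^p_\lambda M = 0$ and the lemma follows.

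There is essentially no obstacle; the only items requiring care are (i) checking that the assumption ``more than two rows'' actually yields a column of length at least three (it does, since the first column's length equals the number of rows), and (ii) being explicit that all three pairwise transpositions among $a, b, c$ stabilize the column setwise and therefore sit in $\mathcal{C}^p_\lambda$, so that whichever pair the pigeonhole step selects is guaranteed to be available in $C^p_\lambda$ when invoking Lemma \ref{lemma:transposition_vanish}.
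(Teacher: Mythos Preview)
Your proposal is correct and follows essentially the same argument as the paper: identify three entries $a,b,c$ in the first column, apply pigeonhole to the exponents $\{0,1\}$ to find a transposition in $C^p_\lambda$ that fixes $M$, and invoke Lemma~\ref{lemma:transposition_vanish}. If anything, your write-up is slightly more careful in making explicit why all three pairwise transpositions lie in $\mathcal{C}^p_\lambda$ and why at least one column has length $\geq 3$.
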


\begin{lemma}
\label{lemma:k<=d}
All multilinear monomials of degree $d$ in $n$ variables, vanish when acted upon with a column anti-symmetrizer associated with a partition $(n-k,k)$ for $k>\min\{d,n-d\}$.
\begin{proof}
	for $k>\{d,n-d\}$ there exists a column transposition $(ab)\in C_\lambda^p$ where both $x_a,x_b$ appear in the monomial to zeroth power, therefore the transposition $(ab)$ leaves it unchanged.
	Applying lemma \ref{lemma:transposition_vanish}, the monomial must vanish under the action.
\end{proof}
\end{lemma}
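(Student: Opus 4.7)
The plan is to exhibit, inside the column subgroup $C_\lambda^p$, a transposition $(ab)$ that fixes $M$, and then invoke lemma \ref{lemma:transposition_vanish} to conclude $\hat C_\lambda^p M = 0$. Since $M$ is multilinear, a transposition $(ab)$ fixes it precisely when $x_a$ and $x_b$ appear in $M$ to the same power --- either both to degree $1$ (``used'') or both to degree $0$ (``unused''). So the problem reduces to finding a column of length $2$ in $\Theta_\lambda^p$ whose two cells are either both used or both unused.

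Next I would set up the shape of the Young diagram of $(n-k,k)$. Since $n-k\ge k$, the first $k$ columns have length $2$ and the remaining $n-2k$ columns have length $1$; the only non-trivial generators of $C_\lambda^p$ are the $k$ transpositions swapping the two cells of each length-$2$ column. The $d$ used variables occupy $d$ of the $n$ cells of the tableau and the remaining $n-d$ cells are occupied by unused variables. The question is then purely combinatorial: we want two cells in the same length-$2$ column to share the same status.

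The main step is a two-sided pigeonhole on these $k$ length-$2$ columns. If $k > d$, the $d$ used cells can meet at most $d$ distinct length-$2$ columns, so at least one length-$2$ column has both of its cells unused; the corresponding transposition lies in $C_\lambda^p$ and fixes $M$. Symmetrically, if $k > n - d$, the $n-d$ unused cells can meet at most $n-d$ distinct length-$2$ columns, so at least one length-$2$ column has both of its cells used, and again the corresponding transposition fixes $M$. The hypothesis $k > \min\{d, n-d\}$ guarantees that at least one of these two cases holds, so in either case lemma \ref{lemma:transposition_vanish} gives $\hat C_\lambda^p M = 0$.

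I do not anticipate a real obstacle here; the argument is a clean double-sided count. The one subtle point worth spelling out is that multilinearity of $M$ makes the ``too few used'' and ``too few unused'' situations symmetric --- both produce a length-$2$ column whose swap is an invariance of $M$ --- which is precisely why the hypothesis involves $\min\{d, n-d\}$ rather than $d$ alone. The borderline case $d = n - d$ is vacuous because the two-row constraint $k \le n/2$ is incompatible with $k > n/2$, consistent with the statement.
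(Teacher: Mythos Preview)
Your proposal is correct and follows essentially the same route as the paper: exhibit a column transposition fixing $M$ and invoke lemma~\ref{lemma:transposition_vanish}. In fact you are slightly more careful than the paper, which only explicitly writes the ``both unused'' case; your pigeonhole argument cleanly handles both the $k>d$ (both unused) and $k>n-d$ (both used) sub-cases, which is exactly what the hypothesis $k>\min\{d,n-d\}$ demands.
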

	
\begin{remark}
The multilinear monomial can be thought of as picking specific boxes in the Young tableau, one can then permute inside the rows, writing down the numbers that appear in the chosen boxes as the indices in the monomial. Finally one can act with the column permutations, while adding their signs, on the monomials found by the rows actions. Summing up all terms gives the result of acting with the Young symmetrizer on the monomial.
The necessary conditions above for $\hat{C}_\lambda^p M \neq 0$ translate to being able to pick $d$ boxes such that at most one box is picked in every column, and no column has more than one box unpicked in it.
\end{remark}

\begin{lemma}
\label{prop:k<=d_exists}
There exists a multilinear monomial of degree $d$ in $n$ variables, that does not vanish when acted upon with a Young symmetrizer associated with a partition $(n-k,k)$ for every $k$ such that $0\leq k \leq d, n-d$.
\begin{proof}
Let $M = \prod_{i=1}^{d} x_i$ 
be a multilinear monomial of degree $d$ in $n$ variables. Let $\Theta^C_{(n-k,k)}$ be the canonical Young tableau associated with the partition $(n-k,k)$ for $0\leq k \leq d,n-d$,
\begin{equation}
\Theta^C_{(n-k,k)} = 
\ytableausetup{boxsize=3em}
\begin{ytableau}
       1 & 2 & \none[\dots] & k  & \none[\dots] & \scriptstyle n-k \\
       \scriptstyle {n-k+1} & \scriptstyle {n-k+2} & \none[\dots] & n
\end{ytableau}
~~.
\end{equation}

We now verify $\hat{Y}^C_{(n-k,k)} M \neq 0$:

The row symmetrizer sums positive elements, therefore the sum cannot vanish 
\begin{equation}
P = \hat{R}^C_{(n-k,k)} M = \sum_{r \in R^C_{(n-k,k)}} r M \neq 0.
\end{equation}

Since $\{ x_i \}_{i=1}^{n}$ are independent variables all elements in the sum above are linearly independent (up to identical elements). We may conclude it is sufficient to show a single element doesn't vanish to prove $\hat{C}^C_{(n-k,k)} P$ doesn't vanish, since $\hat{C}^C_{(n-k,k)}$ includes the trivial element.
In particular, we will show that for $r=e$ the summand $rM=M$ does not vanish under the action of the column symmetrizer.

The column symmetrizer $\hat{C}^C_{(n-k,k)}$ is a sum of closed, independent, column transpositions and their products. All non-trivial transpositions, when acting on $M$ specifically, create linearly independent elements, therefore the sum of such transpositions acting on $M$ cannot vanish.

We may conclude $\hat{C}^C_{(n-k,k)} P$ includes at least one non vanishing term (that is $M$) and therefore $\hat{Y}^C_{(n-k,k)} M \neq 0$.
\end{proof}
\end{lemma}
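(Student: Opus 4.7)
The plan is to follow the setup suggested by the statement: take $M = \prod_{i=1}^d x_i$ together with the canonical Young tableau $\Theta^C_{(n-k,k)}$, and exhibit an explicit surviving monomial in $\hat{Y}^C_{(n-k,k)} M = \hat{C}^C_{(n-k,k)} \hat{R}^C_{(n-k,k)} M$ by exploiting the product structure of the row and column groups. For this tableau the row group factors as $S_{n-k}\times S_k$ (acting on $\{1,\ldots,n-k\}$ and $\{n-k+1,\ldots,n\}$ separately), while the column group is the abelian group $\mathbb{Z}_2^k$ generated by the independent transpositions $\tau_i=(i,\,n-k+i)$ for $i=1,\ldots,k$.

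First I would compute $\hat{R}^C_{(n-k,k)} M$. Under the hypothesis $k\le n-d$ one has $d\le n-k$, so $M$ only involves first-row indices and is fixed by the $S_k$ factor. The $S_{n-k}$ factor symmetrizes $M$ into a positive linear combination of all degree-$d$ multilinear monomials $\prod_{j\in T} x_j$ with $T\subseteq\{1,\ldots,n-k\}$, $|T|=d$, each with the same positive coefficient $k!\,d!\,(n-k-d)!$. Hence $\hat{R}^C_{(n-k,k)} M\neq 0$ trivially.

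Next I would analyze $\hat{C}^C_{(n-k,k)}$ applied to a single $\prod_{j\in T}x_j$. Parameterizing a column element as $c=\prod_{i\in S}\tau_i$ with $S\subseteq\{1,\ldots,k\}$ and sign $(-1)^{|S|}$, the action substitutes $x_i\mapsto x_{n-k+i}$ for $i\in S\cap T$ and leaves the remaining variables untouched (including the variables indexed by $S\setminus T$, since for such $i$ neither $x_i$ nor $x_{n-k+i}$ appears in the monomial). Splitting $S=(S\cap T)\sqcup(S\setminus T)$, the factor summing over $S\setminus T\subseteq\{1,\ldots,k\}\setminus T$ is $\sum_{S'\subseteq\{1,\ldots,k\}\setminus T}(-1)^{|S'|}$, which vanishes unless $\{1,\ldots,k\}\subseteq T$. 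Such $T$ exist since $k\le d\le n-k$ (for instance $T=\{1,\ldots,d\}$), and for each such $T$ one obtains the alternating sum $\hat{C}^C_{(n-k,k)}\prod_{j\in T}x_j=\sum_{S\subseteq\{1,\ldots,k\}}(-1)^{|S|}\prod_{j\in T\setminus S}x_j\prod_{i\in S}x_{n-k+i}$.

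The last step, and where I expect the main subtlety, is ruling out cancellation between contributions coming from different surviving $T$'s. The resolution is a direct bookkeeping argument: every monomial produced by the Young symmetrizer is uniquely determined by the pair (first-row support, second-row support), namely $(T\setminus S,\,\{n-k+i:i\in S\})$, from which both $S$ and $T$ can be recovered. Therefore all monomials arising across the different surviving $T$'s are pairwise distinct, each appears with coefficient $\pm k!\,d!\,(n-k-d)!\neq 0$ in $\hat{Y}^C_{(n-k,k)}M$, and the lemma follows.
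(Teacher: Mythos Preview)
Your proof is correct and follows essentially the same strategy as the paper: take $M=\prod_{i=1}^d x_i$ and the canonical tableau $\Theta^C_{(n-k,k)}$, and argue that $\hat{Y}^C_{(n-k,k)}M$ is a nonzero combination of distinct multilinear monomials. The paper's version is terser---it focuses on the single monomial $M$ surviving in $\hat{C}^C P$---whereas you compute $\hat{R}^C M$ and $\hat{C}^C$ explicitly using the product structure $\mathcal{R}^C\cong S_{n-k}\times S_k$ and $\mathcal{C}^C\cong\mathbb{Z}_2^k$, and then give a clean injectivity argument (recovering $(T,S)$ from the first- and second-row supports) that rules out all cancellation, not just at $M$; this makes your argument more self-contained.
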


\begin{definition}[Hook Length]
    The hook length $h_\lambda (i,j)$ of a box, where $i$ ($j$) denotes the row (column) of the box in the Young diagram $\Theta_\lambda$, is the number of boxes to the right of the $i,j$'th box in the $i$'th row, plus the number of boxes below the box in the $j$'th column plus one.
\end{definition}

\begin{lemma}
\label{lemma:irepps_dim}
The dimension of an irrep associated with a partition $(n-k,k)$ is ${\rm dim}_\lambda=\frac{n!}{k! \frac{(n-k+1)!}{n-2k+1}}$.
\begin{proof}
using the hook length formula~\citep{fulton_representation_2004} 
$${\rm dim}_\lambda=\frac{n !}{\prod_{i,j\in\lambda} h_\lambda(i, j)} .$$
The product in the denominator equals
\begin{align}
\prod_{i,j\in\lambda} h_\lambda(i, j) &= \underbrace{(n-2k)!}_\text{upper row with nothing below} \underbrace{k!}_\text{lower row} \underbrace{\frac{(n-k+1)!}{(n-2k+1)!}}_\text{upper row with boxes below} =k!\frac{(n-k+1)!}{n-2k+1}=\binom{n+1}{k} \frac{(n+1)!}{n-2k+1}.
\end{align}
Resulting in $${\rm dim}_\lambda= \frac{n!}{k! \frac{(n-k+1)!}{n-2k+1}} \sim n^{k}.$$
\end{proof}
\end{lemma}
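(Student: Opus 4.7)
The plan is to directly apply the hook length formula already cited in the lemma to the two-row Young diagram of shape $\lambda=(n-k,k)$, by explicitly computing every hook length and multiplying. First I would partition the boxes of $\Theta_\lambda$ into three disjoint regions according to what sits below and to the right: (i) the $k$ boxes in the top row that have a box beneath them, i.e.\ positions $(1,j)$ for $1\le j\le k$; (ii) the $n-2k$ boxes in the top row with no box beneath, i.e.\ positions $(1,j)$ for $k+1\le j\le n-k$; and (iii) the $k$ boxes in the bottom row, positions $(2,j)$ for $1\le j\le k$. Reading off hook lengths box by box gives $h_\lambda(1,j)=(n-k-j)+1+1=n-k-j+2$ in region (i), $h_\lambda(1,j)=(n-k-j)+0+1=n-k-j+1$ in region (ii), and $h_\lambda(2,j)=(k-j)+0+1=k-j+1$ in region (iii).

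Next I would collect the three products. Region (i) yields
\begin{equation}
\prod_{j=1}^{k}(n-k-j+2)=\prod_{m=n-2k+2}^{n-k+1} m=\frac{(n-k+1)!}{(n-2k+1)!},
\end{equation}
region (ii) yields $\prod_{j=k+1}^{n-k}(n-k-j+1)=(n-2k)!$, and region (iii) yields $\prod_{j=1}^{k}(k-j+1)=k!$. Multiplying them together, the $(n-2k)!$ from region (ii) cancels against $(n-2k+1)!=(n-2k+1)\cdot(n-2k)!$ in the denominator of region (i), leaving
\begin{equation}
\prod_{(i,j)\in\lambda} h_\lambda(i,j)=k!\,\frac{(n-k+1)!}{n-2k+1}.
\end{equation}

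Substituting this into the hook length formula ${\rm dim}_\lambda=n!/\prod_{(i,j)\in\lambda} h_\lambda(i,j)$ immediately gives the claimed expression. Finally, to justify the asymptotic $\sim n^k$ stated at the end, I would note that for fixed $k$ and large $n$ the denominator behaves as $k!\,(n-k+1)!/(n-2k+1)\sim (n-k)!\cdot k!/k!$, so $n!/(n-k)!=n(n-1)\cdots(n-k+1)=\Theta(n^k)$. There is no real obstacle here; the only thing to be careful about is the boundary cases $k=0$ (where region (i) and region (iii) are empty and the formula collapses to $n!/n!=1$ for the trivial rep on a single-row diagram of length $n$, which matches) and $n-2k=0$ (where region (ii) is empty), both of which the derivation above handles correctly since the corresponding empty products equal one.
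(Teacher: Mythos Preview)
Your proposal is correct and follows essentially the same approach as the paper: both apply the hook length formula and split the boxes of the $(n-k,k)$ diagram into the same three regions (top row over a box, top row over nothing, bottom row), obtaining the same three factors $\frac{(n-k+1)!}{(n-2k+1)!}$, $(n-2k)!$, and $k!$. Your write-up is simply more explicit about the individual hook lengths and adds the boundary-case checks and the asymptotic justification, which the paper omits; the only minor slip is in the asymptotic line where the ``$k!/k!$'' cancellation is garbled, but since $k!$ is a constant for fixed $k$ the $\Theta(n^k)$ conclusion is unaffected.
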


\irrepsOfMultilinear*
\begin{proof}
    Let $\Theta^p_\lambda$ be a standard Young tableau of a partition $\lambda$.
	Let $\hat{R}_\lambda^p, \hat{C}_\lambda^p, \hat{Y}^p_\lambda$ be the row symmetrizer, column anti-symmetrizer and Young symmetrizer (respectively) of the $\Theta^p_\lambda$.

    Let $\{M_n^d\}$ be the set of all multilinear monomials in $n$ variables of degree $d$.

    $\{M_n^d\}$ is a basis for the space of multilinear polynomials in $n$ variables of degree $d$. That means ${\rm Span} \{M_n^d\}$ is the space of multilinear polynomials in $n$ variables of degree $d$.

    ${\rm Span} \{M_n^d\}$ is closed under the action of $\hat{R}_\lambda^p$. 
    Therefore, if $\forall M \in \{M_n^d\},~ \hat{C}_\lambda^p M = 0$, 
    then $\forall P \in {\rm Span} \{M_n^d\},~\hat{Y}^p_\lambda P = 0$.
    
    Using lemmas \ref{lemma:rows<=2},\ref{lemma:k<=d} we see that all $P \in {\rm Span} \{M_n^d\}$ vanish under the action of the Young symmetrizers associated with a Young diagram with more than 2 rows or more than $\min\{d,n-d\}$ boxes on the second row.
    
    
    Based on lemma \ref{prop:k<=d_exists} and theorem \ref{theorem:Young projectors} each of the irreps $(n-k,k)$ $0 \leq k \leq d,n-d$ appears at least once in the decomposition of ${\rm Span} \{M_n^d\}$ into irreps of the symmetric group.

    ${\rm Span} \{M_n^d\}$ is $\binom{n}{d}$ dimensional.
    
    Summing the dimension of the irreps (lemma \ref{lemma:irepps_dim})  $$\sum_{k=0}^{\min\{d,n-d\}} \frac{n!}{k! \frac{(n-k+1)!}{n-2k+1}}=\binom{n}{d}$$Since the sum of dimensions of irreps equals the dimension of the space each irrep appears only once.
\end{proof} 



    

\section{Linear example}
\label{appendix:linear_example}
In section \ref{subsec:Expressibility} we have found the kernel can be written in be simplified down to a scalar form
\begin{equation}
\begin{aligned}
k(X,Y)= &\underbrace{\frac{1}{8}  \left( x^{L+1}_1 y^{L+1}_1  +  \left (1-x^{L+1}_1 \right) \left(1- y^{L+1}_1 \right)  \right)}_\mathfrak{A} \cdot
&\underbrace{\left[
\begin{aligned}
&\frac{1}{L^2} \sum_{a,b=1}^{L} \left(  x^{a}_1 y^{b}_1  +  \left (1-x^{a}_1 \right) \left(1- y^{b}_1 \right) \right) \\
+ &\frac{1}{L^2} \sum_{a=1}^{L} \left(  x^{a}_1 y^{a}_1  +  \left (1-x^{a}_1 \right) \left(1- y^{a}_1 \right)  \right)+ \frac{1}{L}  
\end{aligned}
\right]}_\mathfrak{B}
,
\end{aligned}
\label{eq:simplified-kernel}
\end{equation}
revealing the space of expressible function to be the zeroth and first-degree polynomials of the tokens in the context (part $\mathfrak{B}$) window multiplied by the last token(part $\mathfrak{A}$).
We followed up by applying the Thm.~\ref{thm:irreps of multilinear}, to reveal they are composed of standard and trivial irreps.

Turning to the space of the standard irrep, it can be further decomposed to one-dimensional irreps of the cyclic subgroup known as the Fourier modes, thereby acquiring eigenvectors of $\mathfrak{B}$.
Putting these together with the eigenvectors of $\mathfrak{A}$ $a(\vec{x}^{L+1}),~b(\vec{x}^{L+1})$ we find $2(L-2)$ eigenvectors of the kernel (given explicitly in \eqref{eq:k_vectors}).

The eigenvalues are all independent of $k \in \{ 1,2,...(L/2-1) \}$ since all the $k$ modes belong to the same irrep, and only differ by $O (1)$ factor from one another based on the difference between odd and even and the $a,b$ subspaces
\vspace{-0.2cm}
\begin{equation}
\lambda^{\rm odd}_{k,a}, \lambda^{\rm even}_{k,a}, \lambda^{\rm odd}_{k,b}, \lambda^{\rm even}_{k,b} \propto \frac{1}{L^2}
\end{equation}
full expressions are given by
\begin{equation}
\left\{
\begin{aligned}
\binom{{\varphi}^{\rm odd}_{k,a} (X)}{{\varphi}^{\rm odd}_{k,b} (X)} = \binom{\frac{x^{L+1}}{Z^{\rm odd}_{k,a}}}{\frac{1-x^{L+1}}{Z^{\rm odd}_{k,b}}} \sum_{s=1}^{L/2} e^{i \frac{ \pi k}{L/2} s} x^{2s-1}_1,
\quad
\binom{{\varphi}^{\rm even}_{k,a}(X)}{{\varphi}^{\rm even}_{k,b}(X)} = \binom{\frac{x^{L+1}}{Z^{\rm even}_{k,a}}}{\frac{1-x^{L+1}}{Z^{even}_{k,b}}} \sum_{s=1}^{L/2} e^{i \frac{ \pi k}{L/2} s} x^{2s}_1
\end{aligned}
\right\}_{k=1}^{L/2-1},
\label{eq:k_vectors}
\end{equation}
with $Z^{\rm odd}_{k,a},Z^{\rm odd}_{k,b},Z^{even}_{k,a},Z^{even}_{k,b}$ being appropriate normalization constants.

Following the same procedure we find the trivial representation is spanned by
\vspace{-0.2cm}
\begin{equation}
\tilde{\varphi}^{\rm odd}_0 (X) = \sum_{s=1}^{L/2} x^{2s-1}_1;~~ \tilde{\varphi}^{\rm even}_0 (X) = \sum_{s=1}^{L/2} x^{2s}_1;~~ \tilde{\varphi}_c(X) = 1.
\end{equation}
By a Gram–Schmidt like-process, we find a good basis for the space of permutation invariant functions ${\varphi}_{c,*},{\varphi}_{0,*}^{+},{\varphi}_{0,*}^{-}$ with $*=\{a,b\}$; the definitions are given in \eqref{eq:phi_base_def}. The diagonalization in the multiplicity spaces of the trivial irrep can now be carried out numerically or analytically in closed form as it can be written as two $3 \times 3$ matrices.

\subsection{Learnable target}
\label{subsec:Learnable target}
So far, the whole process has been task-independent, the last component required to predict the output of the NN is the projections of the target onto the eigenvectors, which depend on the target function and the training distribution. Since the task requires estimating a parameter not accessible to the network, the projections can never span the true target function, instead even as $N\to \infty$ the network will learn a different function which we dub the \emph{learnable target} given by $\sum_i g_i \varphi_i(x)$. We denote the projections by $g_{*}^{-},g_{*}^{+},g_{c,*},g_{k,*}^{{\rm odd}},g_{k,*}^{{\rm even}}$ for ${\varphi}_{0,*}^{-},{\varphi}_{0,*}^{+},{\varphi}_{c,*},{\varphi}_{k,*}^{{\rm odd}},{\varphi}_{k,*}^{{\rm even}}$ respectively, where $*=\{a,b\}$.
This projections depend on the parameters of the training distribution $p_{a},q_{a},w,L$.
Keeping only leading orders of $w,\frac{1}{L}$ we find $g_{k,*}^{{\rm odd}},g_{k,*}^{{\rm even}}$ vanish for all $k$, and $g_{c,*}$ are constants w.r.t $w,L$ while 
\begin{equation}
g_{*}^{+} =\frac{Lw^{2}\eta_{*}^{+}}{\sqrt{L^{2}w^{2}\rho_{*}^{+}+L\sigma_{*}^{+}}}, \quad
g_{*}^{-} =\frac{Lw^{2}\eta_{*}^{-}+\nu_{*}^{-}}{\sqrt{L^{2}w^{4}\rho_{*}^{-}+Lw^{2}\sigma_{*}^{-}+\xi_{*}^{-}}}, 
\label{eq:learning-target-coeff-short}
\end{equation}
the definitions of 
$\eta_{*}^{\star}$, $\nu_{*}^{\star}$, $\rho_{*}^{\star}$, $\sigma_{*}^{\star}$, $\xi_{*}^{\star}$, where $*=\{a,b\}$ and $\star=\{+,-\}$, are detailed in appendix \ref{appendix:full-expressions}.

\section{Out of distribution predictions under EK approximation}
\label{appendix:OOD-MSE}
\newcommand{\pTrain}{{p_{\rm train}}}
\newcommand{\Etrain}{\E_{X\sim\ptrain}}
\newcommand{\Edata}{\E_{X\sim\pdata}}
\newcommand{\ptest}{{p_{\rm test}}}
\newcommand{\Etest}{\E_{X\sim\ptest}}
\newcommand{\Eparam}{\E_{\Theta}}
We would like to compute the mean square loss (MSE) on an arbitrary data distribution $\pdata$ when training on a data distribution $\pTrain$ for which we know the EK predictor. 

Under EK approximation MSE loss can be computed by
\begin{equation}
\begin{aligned}
&\Edata\Eparam\left[\left(f_{\Theta}\left(X\right)-g\left(X\right)\right)^{2}\right]=
\Edata\Eparam\left[\left(f_{\Theta}\left(X\right)-g\left(X\right)\right)^{2}\right]= \\
&=\Edata\left[\Eparam\left[f_{\Theta}^{2}\left(X\right)\right]-2\Eparam\left[f_{\Theta}\left(X\right)\right]g\left(X\right)+g^{2}\left(X\right)\right]\approx\Edata\left[\Eparam\left[f_{\Theta}\left(X\right)\right]^{2}-2\Eparam\left[f_{\Theta}\left(X\right)\right]g\left(X\right)+g^{2}\left(X\right)\right]=
\\
&=\Edata\left[\left[\sum_{i}\frac{\lambda_{i}}{\lambda_{i}+\sigma^{2}/N}g_{i}\varphi_{i}\left(x\right)\right]^{2}-2\sum_{i}\frac{\lambda_{i}}{\lambda_{i}+\sigma^{2}/N}g_{i}\varphi_{i}\left(x\right)g\left(X\right)+g^{2}\left(X\right)\right],
\end{aligned}
\end{equation}

Where the approximation on the second line is dropping the EK variance
\begin{equation}
\Eparam\left[f_{\Theta}\left(X\right)\right]^{2}=\Eparam\left[f_{\Theta}\left(X\right)\right]^{2}+{\rm Var}\left[f_{\Theta}\left(X\right)\right]\approx\Eparam\left[f_{\Theta}\left(X\right)\right]^{2} .
\label{eq:MSE_EK_appendix}
\end{equation}
One can in fact calculate this quantity easily within the GP framework but we found the approximation to be good enough as is and chose to drop it for simplicity. This result is exact when using an ensemble, or when using the EK predictor itself as the predictor.

By the spectral theorem, are guaranteed $\phi_i(x)$ to be orthogonal w.r.t. an inner product defined with $\pTrain$ as a weighting function as defined in \eqref{eq:inner-product-def}. Therefore, choosing $\pdata=\pTrain$ simplifies \eqref{eq:MSE_EK_appendix} to

\begin{equation}
    =\sum_{i }\left(\frac{\lambda_{i}}{\lambda_{i}+\sigma^{2}/N}\right)^{2}g_{i}^{2}-2\sum_{i}\frac{\lambda_{i}}{\lambda_{i}+\sigma^{2}/N}g_{i}^{2}+\left\langle g,g\right\rangle _{X\sim\pTrain},
\end{equation}
as $\left\langle \varphi_{i},\varphi_{j}\right\rangle_\pTrain = \delta_{ij}$, with $\delta_{ij}$ being the Kronecker delta.

Now, if we wish to compute the loss under distributional shift all we have to do is take the expectation value in \eqref{eq:MSE_EK_appendix} w.r.t. a new distribution $\pdata=\ptest \neq \pTrain$
\begin{equation}
 \sum_{i}\sum_{j}\frac{\lambda_i}{\lambda_i+\sigma^{2}/N}\frac{\mu_j}{\mu_j+\sigma^{2}/N}g_{i}g_{j}\left\langle \varphi_{i},\varphi_{j}\right\rangle _{X\sim\ptest}-2\sum_{i}\frac{\lambda_i}{\lambda_i+\sigma^{2}/N}g_{i}\left\langle \varphi_{i},g\right\rangle _{X\sim\ptest}+\left\langle g,g\right\rangle _{X\sim\ptest}~.
\end{equation}
Notably, the eigenfunctions that were orthonormal under the inner product induced by the training distribution are no longer necessarily orthonormal under the test distribution, and the object $G_{ij}:=\left\langle \varphi_{i},\varphi_{j}\right\rangle _{X\sim\ptest}$ can be identified as their Gram matrix. Nevertheless, the elements of the Gram matrix can be evaluated as sums or integrals, analytically or numerically. In our case, we evaluated the elements of $G_{ij}$ for the full parameter space of $p_a,q_a,w$ analytically.

\section{Sub-leading corrections from $x^{L+1}$}
\label{appendix:corrections-xL+1}
The terms left out during the approximation are
\begin{equation}
\begin{aligned}
k^{\left(1\right)}\left(X,Y\right)=&\frac{1}{8L^{2}}\left(x^{L+1}y^{L+1}+\left(1-x^{L+1}\right)\left(1-y^{L+1}\right)\right)\cdot...
\\
...\cdot & \left[\sum_{a=1}^{L}x^{L+1}y^{a}+\sum_{a=1}^{L}\left(1-x^{L+1}\right)\left(1-y^{a}\right)+\sum_{a=1}^{L}x^{a}y^{L+1}+\sum_{a=1}^{L}\left(1-x^{a}\right)\left(1-y^{L+1}\right)+... \right. \\
...+ &  \left. 3x^{L+1}y^{L+1}+3\left(1-x^{L+1}\right)\left(1-y^{L+1}\right)+1\right]
\end{aligned}
\end{equation}
All the vectors ${\varphi}^{\rm odd}_{k,a} (X),~{\varphi}^{\rm even}_{k,a} (X)~,{\varphi}^{\rm odd}_{k,b} (X),~{\varphi}^{\rm even}_{k,b} (X)$ in the standard representation get no corrections at all as their matrix elements with all basis vectors vanish.

Moving on to the two $3\times3$ blocks of the trivial representation, ${\varphi}_{0,a}^+,{\varphi}_{0,a}^-$ (${\varphi}_{0,b}^+,{\varphi}_{0,b}^-$) can only have non-vanishing matrix elements with the ${\varphi}_{c,a}$ (${\varphi}_{c,b}$). These terms are at most $O\left(\frac{1}{L^{3}}\right)$; furthermore, they are second-order corrections in the eigenvalue perturbation and are therefore sub-leading.

Last ${\varphi}_{c,a}$ (${\varphi}_{c,b}$) can get corrections to the diagonal term, but they will be at most $O\left(\frac{1}{L}\right)$ while the leading term is $O\left(1\right)$.


\section{Large structure decomposition and non-linearities}
\label{appendix:xl-decomposition-nonlin}
One can write the kernel of the network when applying non-linearities in the form:
\begin{equation}
k(X,Y) = \sum_\alpha k^{L+1}_\alpha (x^{L+1},y^{L+1}) k^{L}_\alpha (\{x^{s}\}_{s=1}^L,\{y^{s}\}_{s=1}^L).
\end{equation}
for some $\left\{ k^{L+1}_\alpha, k^{L}_\alpha \right\}_\alpha$.
Since all $k^{L}_\alpha$ possess the permutation symmetry they will be diagonalized in the same basis as the symmetry operator. Suppose $\varphi_j^L \left(\{x^{s}\}_{s=1}^L\right)$ is a non-degenerate eigenfunction of the symmetry operator, we have that $\hat{K}^{L}_\alpha \varphi_j = \lambda_{\alpha,j}^L \varphi_j$ simplifying the kernel eigenvalue problem to
\begin{equation}
\hat{K} \left( \varphi^{L+1}_i  \varphi^{L}_j \right)=  \lambda_{ij} \left( \varphi^{L+1}_i  \varphi^{L}_j \right),
\end{equation}
where $\{\varphi^{L}_j\}_{j=1}^n$ are known, forming blocks of size $n$. Note that this is not a simple tensor product structure $\lambda_{ij} \neq \lambda_{i}^{L+1} \lambda_{j}^L$ as $x^{L+1}$ is not independent of $\{x^{s}\}_{s=1}^L$.

\section{WikiText-2 Symmetry Experiment Details}
\label{appendix:wikitext}
\begin{figure*}[t!]
    \centering
\includegraphics[width=.33\columnwidth]{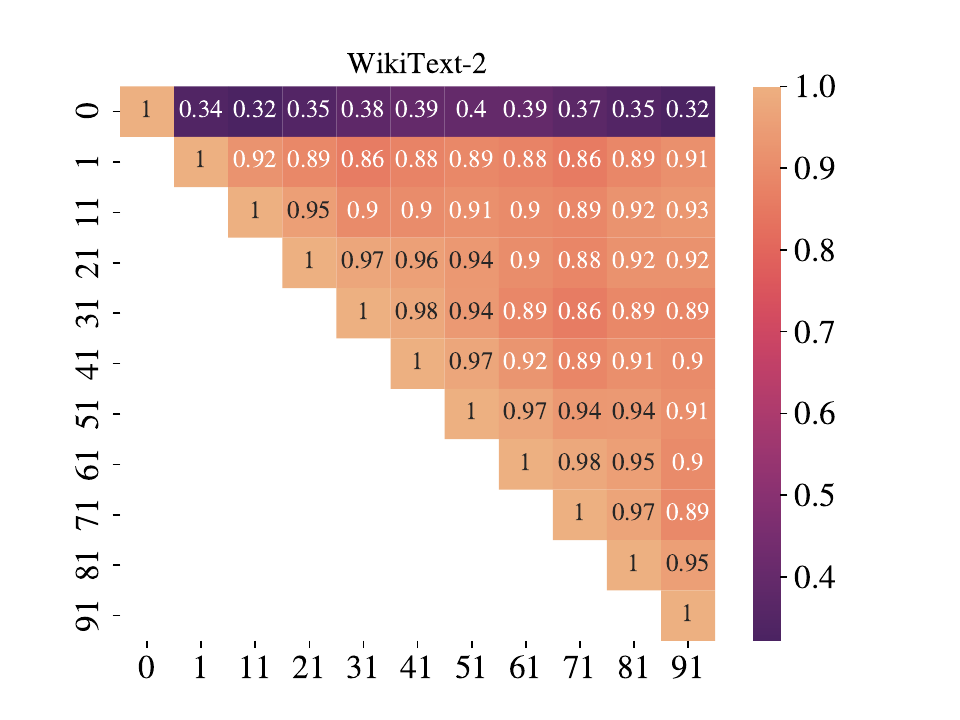}
  \vspace{-0.35cm}\hfill
\includegraphics[width=.33\columnwidth]{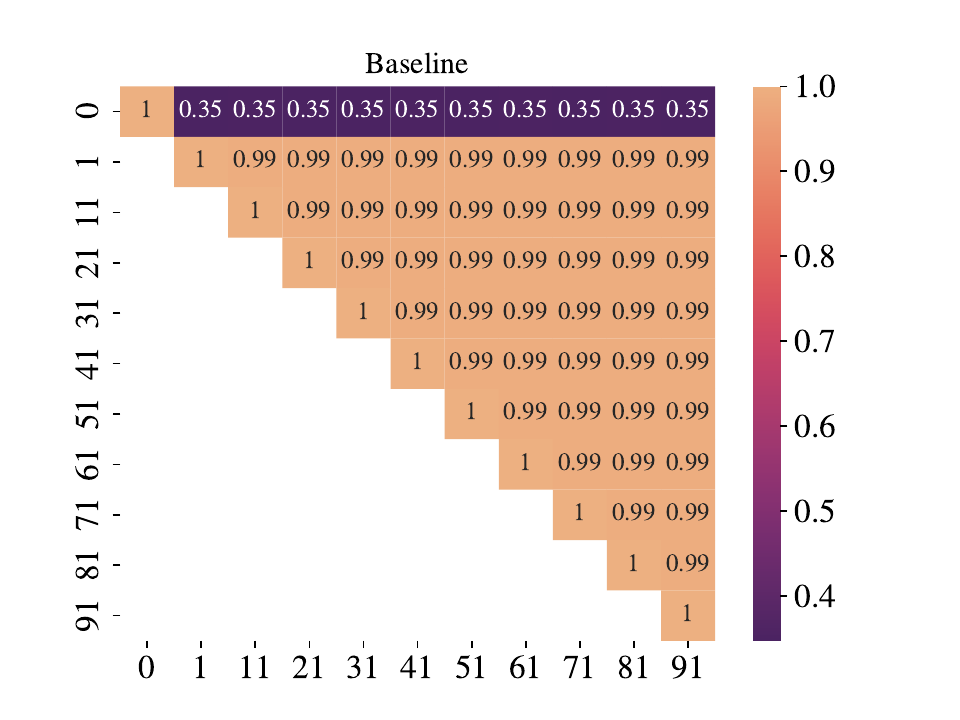}
  \vspace{-0.35cm}\hfill
\includegraphics[width=.33\columnwidth]{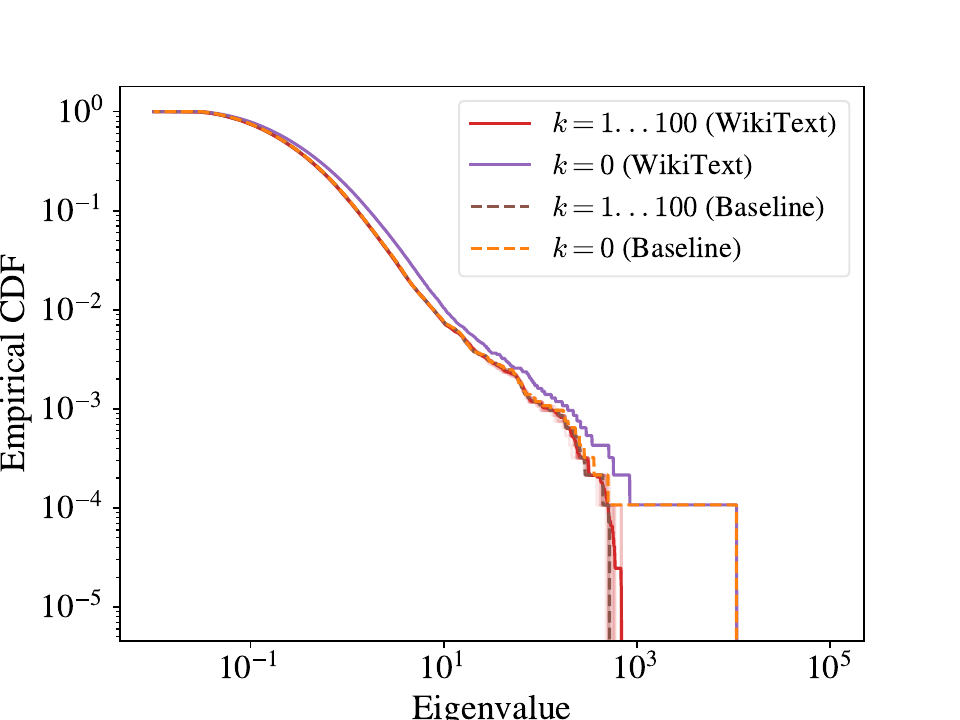}
\vspace{0.7cm} 
     \caption{ {\bf Left: (Similarity measure between $C^{kk}$ and $C^{k'k'}$ on WikiText)} the cosine similarity induced by the Frobenius inner product between the linear features of WikiText $C^{kk}$ and $C^{k'k'}$ for the $k$'s indicated on the boundary. We see all sampled $k \neq 0$ are similar to one another but different from $k=0$ as predicted by the irrep decomposition.
     {\bf Center: (Similarity measure between $C^{kk}$ and $C^{k'k'}$ on a permutation symmetric baseline dataset)} The baseline dataset is created by sampling words from WikiText with frequencies as in WikiText, but with to sequential order. The underlying distribution of baseline is therefore completely permutation symmetric in sequence space. We display the same quantity as the figure on the left, this time calculated on the baseline dataset. 
    We see all sampled $k \neq 0$ are very similar, measurably more so then the same features of WikiText. Yet, comparing the differences between the datasets to the similarity gap of $\simeq 0.6$ between $k=0$ and all $k'\neq 0$, the results for WikiText and the baseline dataset are remarkably similar, suggesting an approximate permutation symmetry.
    {\bf Right: (Comparing the spectra of $C^{kk}$ between WikiText and the permutation symmetric baseline)} The similarity between $k$s with $k\neq 0$ is again seen in the spectra. One notably difference between the baseline and WikiText is the spectrum of $C^{00}$ which differs along almost all the scale of eigenvalues, showing the principle components do capture information about sequence dependence, information that does not exist in the baseline.}
\end{figure*}
Here we give some of the details about the WikiText-2 symmetry experiment.
We started with tokenizing and trimming:
each sample was tokenized and trimmed to $L=101$ tokens. We removed any sample that was shorter than $101$ tokens, leaving us with about $10,000$ samples.

If the dataset is permutation invariant, Ideally, one would now want to perform principal component analysis (PCA) and find a set of generically $N_{\rm voc}$ different states, each with degeneracy $L-1$ for $k=1,...,L-1$ belonging to the standard irrep, and another set of generically non-degenerate $N_{\rm voc}$ different states, for $k=0$ belonging to the trivial irrep.
The PCA matrix would be
\begin{equation}
C^{a b}_{ij} := \E_{X \sim {\rm WikiText-2}} \left[ X^a_i X^b_j \right],
\end{equation}
where $a,i$ and $b,j$ can be understood as some ``flattened" super index of a $(L \cdot \nvoc) \times  (L \cdot \nvoc)$ dimensional matrix.

Moving on to Fourier space
\begin{align}
\tilde{C}^{k k'}_{ij} &:= \E_{X \sim {\rm WikiText-2}} \left[ X^a_i V^{a k} X^b_j V^{b k'} \right]; \\
V^{ak} &:= \exp(i \frac{2 \pi}{L} a k),~~ 
\begin{aligned} \vspace{-0.2cm}
    a &= 1,...,L \\
    k &= 0,...,L-1
\end{aligned} .\vspace{-0.4cm}
\end{align}
One would then expect to find a block diagonal matrix where $\tilde{C}^{kk'}_{ij}=0$ for $k \neq k'$ and $\tilde{C}^{kk}_{ij}=\tilde{C}^{k'k'}_{ij}$ for $k,k'\in \{1,...,L-1\}$.

However, since the number of samples $N<L \cdot \nvoc, \nvoc$ one cannot expect to find a block diagonal structure. Both the ranks of the matrix $\tilde{C}$ and the block $\tilde{C}^{k,k'}$ are determined by $N$, such that $\rank{\tilde{C}}=\rank{\tilde{C}^{k,k}}=N$, so the off-block-diagonal elements must not vanish to make the equality possible. A well-studied similar setting is that of the Wishart ensemble in random matrix theory~\cite{potters_first_2020,10.1093/oxfordhb/9780198744191.001.0001}.
Even with $N<L \cdot \nvoc$ we may still expect $\tilde{C}^{kk}_{ij}=\tilde{C}^{k'k'}_{ij}$ for $k,k'\in \{1,...,L-1\}$, but we would have to consider the noise due to the finite sampling.

To measure whether $\tilde{C}^{kk}_{ij}=\tilde{C}^{k'k'}_{ij}$ for $k,k'\in \{1,...,L-1\}$ we present in the main text the cosine similarity induced by the Frobenius inner product and compare the spectrum's empirical cumulative distribution function (ECDF). Here we give a more detailed plot of the similarity, where the cosine similarity value is written on top of each corresponding square

It is important to noted our results support the hypothesis WikiText-2 has an \emph{approximate} permutation symmetry, but not an exact one. We create a baseline dataset where the word frequencies are identical to those in WikiText-2, but are drawn uniformly over the sequence, i.e. the underlying distribution is exactly permutation symmetric. One can see it shows greater similarity between $k,k'\neq 0$
blocks than the one found on WikiText-2. We can also see the information contain in WikiText-2 principal components goes beyond the frequency of words, merely by the fact the results are not identical. We also compare the empirical eigenvalue CDF, showing agreement on the spectra of $\tilde{C}^{kk}$ for $k\neq0$ but a significant difference for $k=0$.

In principle, using this method, one can look at correlations up to an arbitrary order, e.g. the third-order correlator would be 
\begin{equation}
\mathcal{C}^{a b c}_{ijk} := \E_{X \sim {\rm WikiText-2}} \left[ X^a_i X^b_j X^c_k \right].
\end{equation}

\section{Full expressions of quantities in the main text}
\label{appendix:full-expressions}
Here we provide the full expressions for some of the quantities defined in the main text and in Appendix~\ref{appendix:linear_example}.

The basis chosen for the trivial representation is
\begin{equation}
\begin{aligned}\binom{{\varphi}_{c,a}}{{\varphi}_{c,b}}(X) & =\binom{\frac{1}{Z_{c,a}}}{\frac{1}{Z_{c,b}}}\binom{x_{1}^{L+1}}{1-x_{1}^{L+1}}\\
\binom{{\varphi}_{0,a}^{+}}{{\varphi}_{0,b}^{+}}(X) & =\binom{\frac{1}{Z_{0,a}^{+}}}{\frac{1}{Z_{0,b}^{+}}}\binom{x_{1}^{L+1}}{1-x_{1}^{L+1}}\frac{1}{L} &  & \left[\sum_{s=1}^{L}x^{s}-\binom{\frac{c_{a}^{{\rm odd}}+c_{a}^{{\rm even}}}{2}}{\frac{c_{b}^{{\rm odd}}+c_{b}^{{\rm even}}}{2}}\right]\\
\binom{{\varphi}_{0,a}^{-}}{{\varphi}_{0,b}^{-}}(X) & =\binom{\frac{1}{Z_{0,a}^{-}}}{\frac{1}{Z_{0,b}^{-}}}\binom{x_{1}^{L+1}}{1-x_{1}^{L+1}}\frac{1}{L} &  & \left[\binom{\alpha_{a}}{\alpha_{b}}\left(\sum_{s=1}^{L/2}x^{2s-1}-\binom{c_{a}^{{\rm odd}}}{c_{b}^{{\rm odd}}}\right)\ldots\right.\\
 &  &  & \left.\ldots-\binom{\beta_{a}}{\beta_{b}}\left(\sum_{s=1}^{L/2}x^{2s}-\binom{c_{a}^{{\rm even}}}{c_{b}^{{\rm even}}}\right)\right]
\end{aligned}
\label{eq:phi_base_def}
\end{equation}
with
\begin{equation}
\begin{aligned}
    \alpha_a =& \frac{-24 {p_a}{q_a} ({p_a}+{q_a}-2) ({p_a}+{q_a}) -12 w \left({p_a}^3+{p_a}^2 (7 {q_a}-2)+{p_a}{q_a} (7 {q_a}-8)+({q_a}-2) {q_a}^2\right)  + ...} {48 ({p_a}+{q_a}+w)} \\
    &\frac{... + 2 w^2 \left((L-16) {p_a}^2+{q_a} ((L-16) {q_a}+18)+{p_a}(18-44 {q_a})\right)
    +2 w^3 + ...}{48 ({p_a}+{q_a}+w)} \\
    &\frac{... +((L-14) {p_a}+(L-14) {q_a}+6)+(L-8) w^4
    }{48 ({p_a}+{q_a}+w)}
\end{aligned}
\end{equation}
\begin{equation}
\begin{aligned}
    \beta_a =   
    &\frac{-36 ({p_a}+{q_a}) \left(({p_a}-1) {p_a}^2+({q_a}-1) {q_a}^2\right) -18 w \left(5 {p_a}^3+{p_a}^2 (3 {q_a}-4)+{p_a}{q_a} (3 {q_a}-4)+{q_a}^2 (5 {q_a}-4)\right)+ ...}{72 ({p_a}+{q_a}+w)}\\
    &\frac{... + 6 w^2 \left({p_a}((L-12) {q_a}+10)-15 {p_a}^2+5 {q_a} (2-3 {q_a})\right)+3 w^3 ((L-18) {p_a}+(L-18) {q_a}+8)+(L-18) w^4}{72 ({p_a}+{q_a}+w)}
\end{aligned}
\end{equation}
\begin{equation}
\begin{aligned}
    \alpha_b =& 
    -\frac{-24 ({p_a}-1) ({q_a}-1) ({p_a}+{q_a}-2) ({p_a}+{q_a})+...}{48 ({p_a}+{q_a}+w-2)} \\
    &\frac{...-12 w \left({p_a}^3+{p_a}^2 (7 {q_a}-8)+{p_a}({q_a}-2) (7 {q_a}-6)+({q_a}-6) ({q_a}-2) {q_a}-4\right) + ...}{48 ({p_a}+{q_a}+w-2)}\\
    &\frac{...+2 w^2 \left(L (({p_a}-2) {p_a}+({q_a}-2) {q_a}+2)-2 \left(8 {p_a}^2+{p_a}(22 {q_a}-29)+{q_a} (8 {q_a}-29)+20\right)\right) + ...} {48 ({p_a}+{q_a}+w-2)} \\
    &\frac{...+2 w^3 (L ({p_a}+{q_a}-2)-2 (7 {p_a}+7 {q_a}-11))+(L-8) w^4}{48 ({p_a}+{q_a}+w-2)}
\end{aligned}
\end{equation}
\begin{equation}
\begin{aligned}
    \beta_b =   
    &\frac{36 ({p_a}+{q_a}-2) \left({p_a}({p_a}-1)^2+({q_a}-1)^2 q\right) +...}{72 ({p_a}+{q_a}+w-2)} \\
    &\frac{...+18 w \left(5 {p_a}^3+{p_a}^2 (3 {q_a}-14)+{p_a}({q_a} (3 {q_a}-8)+12)+{q_a} ({q_a} (5 {q_a}-14)+12)-4\right)+...}{72 ({p_a}+{q_a}+w-2)} \\
    &\frac{...+ 6 w^2 \left(L ({p_a}(-{q_a})+{p_a}+{q_a}-1)+15 {p_a}^2+4 {p_a}(3 {q_a}-8)+{q_a} (15 {q_a}-32)+22\right) +...}{72 ({p_a}+{q_a}+w-2)}\\
    &\frac{...-3 w^3 (L ({p_a}+{q_a}-2)-2 (9 {p_a}+9 {q_a}-14))-\left((L-18) w^4\right)}{72 ({p_a}+{q_a}+w-2)}
\end{aligned}
\end{equation}

\begin{equation}
\begin{aligned}
    c_{a}^{{\rm odd}} &= \frac{3 \left({p_a}^2+{q_a}^2\right)+3 w ({p_a}+{q_a})+2 w^2}{3 ({p_a}+{q_a}+w)} \\
    c_{a}^{{\rm even}} &=   
    \frac{(2 {p_a}+w) (2 {q_a}+w)}{2 ({p_a}+{q_a}+w)} \\
    c_{b}^{{\rm odd}} &= \frac{3 w ({p_a}+{q_a}-1)+3 ({p_a}-1) {p_a}+3 ({q_a}-1) {q_a}+2 w^2}{3 ({p_a}+{q_a}+w-2)} \\
    c_{b}^{{\rm even}} &= \frac{2 {p_a}(2 {q_a}+w-1)+2 {q_a} (w-1)+(w-2) w}{2 ({p_a}+{q_a}+w-2)}
\end{aligned}
\end{equation}
\begin{equation}
\begin{aligned}
\lambda^{\rm odd}_{k,a} &= \frac{1}{8 L^2} \left[ 2\left(\left(1-p_{a}\right)p_{a}^{2}+\left(1-q_{a}\right)q_{a}^{2}\right)+O(w) \right], \\
\lambda^{\rm even}_{k,a} &= \frac{1}{8 L^2} \left[ 2 p_a q_a \left(1-p_a+1-q_a\right) + O(w) \right], \\
\lambda^{\rm odd}_{k,b} &= \frac{1}{8 L^2} \left[ 2\left(p_{a}\left(1-p_{a}\right){}^{2}+q_{a}\left(1-q_{a}\right){}^{2}\right)+O(w) \right], \\
\lambda^{\rm even}_{k,b} &= \frac{1}{8 L^2} \left[ 2\left(1-p_{a}\right)\left(1-q_{a}\right)\left(p_{a}+q_{a}\right)+O(w) \right]
\end{aligned}
\label{eq:appndx_k_eigvals}
\end{equation}
To leading order in $\frac{1}{L},w$, the spanning coefficients of the learnable target are given by 
\begin{equation}
\begin{aligned}
g_{k,*}^{{\rm odd}} & =0, &g_{k,*}^{{\rm even}} &= 0\\
g_{*}^{+} & =\frac{Lw^{2}\eta_{*}^{+}}{\sqrt{L^{2}w^{2}\rho_{*}^{+}+L\sigma_{*}^{+}}},
& g_{*}^{-} & =\frac{Lw^{2}\eta_{*}^{-}+\nu_{*}^{-}}{\sqrt{L^{2}w^{4}\rho_{*}^{-}+Lw^{2}\sigma_{*}^{-}+\xi_{*}^{-}}},\\
g_{c,a} & =\frac{p_{a}q_{a}}{\sqrt{\frac{p_{a}+q_{a}}{2}}},
&g_{c,b} & =\frac{q_{a}+p_{a}-2p_{a}q_{a}}{\sqrt{2\left(1-p_{a}+1-q_{a}\right)}};
\end{aligned}
\label{eq:learning-target-coeff}
\end{equation}
with
\begin{equation}
\begin{aligned}
    \eta^{+}_{0,a} &= 2 \left({p_a}^2+{q_a}^2\right) \\
    \rho^{+}_{0,a} &= 48 ({p_a}+{q_a})^3 \\
    \sigma^{+}_{0,a} &= -576 ({p_a}+{q_a})^3 (({p_a}-1) {p_a}+({q_a}-1) {q_a})
\end{aligned}
\end{equation}

\begin{equation}
\begin{aligned}
    \eta^{+}_{0,b} &= 2 ({p_a}-2) {p_a}+2 ({q_a}-2) {q_a}+4 \\
    \rho^{+}_{0,b} &= 2 ({p_a}-2) {p_a}+2 ({q_a}-2) {q_a}+4 \\
    \sigma^{+}_{0,b} &= 576 ({p_a}+{q_a}-2)^3 (({p_a}-1) {p_a}+({q_a}-1) {q_a})
\end{aligned}
\end{equation}

\begin{equation}
\begin{aligned}
    \eta^{-}_{0,a} &= -72 {p_a}{q_a} ({p_a}-{q_a})^2 ({p_a}+{q_a})\\
    \nu^{-}_{0,a} &= 864 {p_a}{q_a} ({p_a}-{q_a})^2 ({p_a}+{q_a}) (({p_a}-1) {p_a}+({q_a}-1) {q_a}) \\
    \rho^{-}_{0,a} &= 10368 {p_a}{q_a} ({p_a}-{q_a})^2 ({p_a}+{q_a})^3 \\
    \sigma^{-}_{0,a} &= -248832 {p_a}{q_a} ({p_a}-{q_a})^2 ({p_a}+{q_a})^3 (({p_a}-1) {p_a}+({q_a}-1) {q_a}) \\
    \xi^{-}_{0,a} &= 1492992 {p_a}{q_a} ({p_a}-{q_a})^2 ({p_a}+{q_a})^3 (({p_a}-1) {p_a}+({q_a}-1) {q_a})^2
\end{aligned}
\end{equation}

\begin{equation}
\begin{aligned}
    \eta^{-}_{0,a} &= -72 ({p_a}-1) ({q_a}-1) ({p_a}-{q_a})^2 ({p_a}+{q_a}-2)\\
    \nu^{-}_{0,a} &= 864 ({p_a}-1) ({q_a}-1) ({p_a}-{q_a})^2 ({p_a}+{q_a}-2) (({p_a}-1) {p_a}+({q_a}-1) {q_a}) \\
    \rho^{-}_{0,a} &= -10368 ({p_a}-1) ({q_a}-1) ({p_a}-{q_a})^2 ({p_a}+{q_a}-2)^3 \\
    \sigma^{-}_{0,a} &= 248832 ({p_a}-1) ({q_a}-1) ({p_a}-{q_a})^2 ({p_a}+{q_a}-2)^3 (({p_a}-1) {p_a}+({q_a}-1) {q_a}) \\
    \xi^{-}_{0,a} &= -1492992 ({p_a}-1) ({q_a}-1) ({p_a}-{q_a})^2 ({p_a}+{q_a}-2)^3 (({p_a}-1) {p_a}+({q_a}-1) {q_a})^2
\end{aligned}
\end{equation}
\end{document}